\documentclass[letterpaper]{article} % DO NOT CHANGE THIS
\usepackage{aaai2026}  % DO NOT CHANGE THIS
\usepackage{times}  % DO NOT CHANGE THIS
\usepackage{helvet}  % DO NOT CHANGE THIS
\usepackage{courier}  % DO NOT CHANGE THIS
\usepackage[hyphens]{url}  % DO NOT CHANGE THIS
\usepackage{graphicx} % DO NOT CHANGE THIS
\usepackage{natbib}  % DO NOT CHANGE THIS AND DO NOT ADD ANY OPTIONS TO IT
\usepackage{caption} % DO NOT CHANGE THIS AND DO NOT ADD ANY OPTIONS TO IT
\usepackage{algorithm}
\usepackage[noend]{algorithmic}
\usepackage{amsmath}
\usepackage{amssymb}
\usepackage{amsthm}
\usepackage{color}
\usepackage{xcolor}
\theoremstyle{plain}
\newtheorem{dfn}{Definition}
\newtheorem{lem}{Lemma}
\newtheorem{thm}{Theorem}

\newtheorem{prb}{Problem}

\newtheorem{exm}{Example}

\newcommand{\IFLINE}[2]{\STATE\algorithmicif\ #1\ \algorithmicthen\ #2}

\usepackage{newfloat}
\usepackage{listings}
\DeclareCaptionStyle{ruled}{labelfont=normalfont,labelsep=colon,strut=off} % DO NOT CHANGE THIS
\floatstyle{ruled}
\newfloat{listing}{tb}{lst}{}
\floatname{listing}{Listing}
\title{I-CAM-UV: Integrating Causal Graphs over Non-Identical Variable Sets\\Using Causal Additive Models with Unobserved Variables}
\author {
    Hirofumi Suzuki\textsuperscript{\rm 1},
    Kentaro Kanamori\textsuperscript{\rm 1},
    Takuya Takagi\textsuperscript{\rm 1},
    Thong Pham\textsuperscript{\rm 2, \rm 5}\\
    Takashi Nicholas Maeda\textsuperscript{\rm 3, \rm 5},
    Shohei Shimizu\textsuperscript{\rm 2, \rm 4, \rm 5}
}
\affiliations {
    \textsuperscript{\rm 1}Fujitsu Limited\\
    \textsuperscript{\rm 2}Faculty of Data Science, Shiga University\\
    \textsuperscript{\rm 3}Computer Centre, Gakushuin University\\
    \textsuperscript{\rm 4}SANKEN, The University of Osaka\\
    \textsuperscript{\rm 5}RIKEN AIP\\
    suzuki-hirofumi@fujitsu.com, k.kanamori@fujitsu.com, takagi.takuya@fujitsu.com, thong-pham@biwako.shiga-u.ac.jp,\\
    takashi.maeda@gakushuin.ac.jp, shohei-shimizu@ds.sanken.osaka-u.ac.jp
}
\begin{document}

\maketitle

\begin{abstract}
Causal discovery from observational data is a fundamental tool in various fields of science.
While existing approaches are typically designed for a single dataset, we often need to handle multiple datasets with non-identical variable sets in practice.
One straightforward approach is to estimate a causal graph from each dataset and construct a single causal graph by overlapping.
However, this approach identifies limited causal relationships because unobserved variables in each dataset can be confounders, and some variable pairs may be unobserved in any dataset.
To address this issue, we leverage Causal Additive Models with Unobserved Variables (CAM-UV) that provide causal graphs having information related to unobserved variables.
We show that the ground truth causal graph has structural consistency with the information of CAM-UV on each dataset.
As a result, we propose an approach named I-CAM-UV to integrate CAM-UV results by enumerating all consistent causal graphs.
We also provide an efficient combinatorial search algorithm and demonstrate the usefulness of I-CAM-UV against existing methods.
\end{abstract}

% Uncomment the following to link to your code, datasets, an extended version or similar.
% You must keep this block between (not within) the abstract and the main body of the paper.
% \begin{links}
%     \link{Code}{https://aaai.org/example/code}
%     \link{Datasets}{https://aaai.org/example/datasets}
%     \link{Extended version}{https://aaai.org/example/extended-version}
% \end{links}

%%%%%%%%%%%%%%%%%%%%%%%%%%%%%%%%%%%%%%%%%%%%%%%%%%
% Introduction
%%%%%%%%%%%%%%%%%%%%%%%%%%%%%%%%%%%%%%%%%%%%%%%%%%
\section{Introduction}\label{sec:intro}

%%% 因果探索の重要性と未観測変数
Identifying causal relationships among variables is a fundamental task in various fields of science.
While the most effective approach is a randomized controlled trial, such an approach is often difficult due to cost, ethical, and technical reasons.
Thus, the mainstream is \emph{causal discovery} from purely observational data~\citep{Spirtes2000causation}: estimating causal relationships as a causal graph, often assumed to be a directed acyclic graph (DAG).
Recent works in causal discovery span diverse fields, including environmental science~\citep{Runge2023causal,Fu2025Environmental}, biology~\citep{Fu2025nature,Smith2025Multi}, and materials and drug sciences~\citep{Campomanes2014origin,Zhou2025tcbbio}.

In many of the causal discovery examples, the number of variables is relatively small---often around 10---because of limits in what can be measured or easily understood.
On the other hand, most of these studies assume that there are no unobserved confounders, even though such hidden common causes can exist even in small systems. 
This highlights the need for causal discovery methods that can handle unobserved confounding, especially in small to medium systems.

Moreover, in many situations, multiple datasets share common objectives but differ in measurement settings such as observed variables.
If we can leverage such datasets, it is expected to identify more causal relationships over just using a single dataset.
Thus, in this study, we consider causal discovery from multiple datasets with non-identical variable sets~\cite{Tillman14BHMK,Mooij20JCI}.

%%% 非共通変数に対する手法の動向（ION,Huagn20）
Despite the usefulness and demand, existing approaches are typically designed for a single dataset, assuming no unobserved confounders, and only a few approaches tackle the above scenario.
ION~\citep{Tillman2009ION}, IOD~\citep{Tillman2011IOD}, and COmbINE~\citep{Triantafillou2010COmbINE} construct partial ancestral graphs (PAGs) over the integrated variable set by utilizing conditional independence information.
In contrast, CD-MiNi~\citep{Huang2020CD-MiNi} constructs a full DAG by restricting the causal model to linear non-Gaussian cases.
However, these approaches have the following limitations: PAGs contain uncertain information about causal relationships, and CD-MiNi is not suitable for non-linear cases.

%%% 提案
Aiming to estimate full DAGs and handle a more expressive model, we focus on the Causal Additive Model (CAM)~\citep{Buhlmann14CAM} that captures non-linear causal relationships.
One straightforward approach is to estimate a causal graph from each dataset by CAM and integrate them into a single graph.
However, this approach identifies limited causal relationships because some variables in each dataset can be confounded with each other.
Moreover, if some variable pairs are unobserved in any dataset, their causal relationships can not be identified by direct estimation from the datasets.

%%% 貢献まとめ
To address this issue, we leverage the Causal Additive Model with Unobserved Variables (CAM-UV)~\citep{Maeda21UAI} that allows for the presence of unobserved variables.
CAM-UV provides structural information on causal graphs related to unobserved variables (unobserved causal path and unobserved backdoor path).
We show that the ground truth causal graph has structural consistency with the provided information of CAM-UV on each dataset.
Because multiple different DAGs satisfy such characterization, we propose an approach to enumerate possible DAGs and name the approach Integrating CAM-UV (I-CAM-UV).
Moreover, we show an efficient algorithm for I-CAM-UV and confirm its usefulness via experiments.
Our contributions are summarized as follows:
\begin{itemize}
\item We propose an enumeration approach named I-CAM-UV to obtain integrated causal graphs of CAM-UV over multiple datasets with non-identical variable sets.
\item We show that I-CAM-UV recovers the ground truth DAG when CAM-UV results have no enormous errors and the integrated variable set has no further confounders.
\item We propose an efficient I-CAM-UV algorithm with a best-first strategy based on a monotonic inconsistency of DAGs and demonstrate its usefulness via experiments.
\end{itemize}

%%%%%%%%%%%%%%%%%%%%%%%%%%%%%%%%%%%%%%%%%%%%%%%%%%
% Related Work
%%%%%%%%%%%%%%%%%%%%%%%%%%%%%%%%%%%%%%%%%%%%%%%%%%
\section{Related Work}
%%% 段落1．複数データセットで変数が一部重複のケースを考えてる手法は少ないよ
While there are a variety of similar but slightly different datasets in the real world, almost all studies on causal discovery are specialized for a single dataset, and only a few studies tackle multiple datasets with non-identical variable sets~\citep{Tillman2009ION,Triantafillou2010COmbINE,Tillman2011IOD,Huang2020CD-MiNi}.
Our study challenges that field and aims to allow for richer and more practical causal analysis.

%%% 段落2. 制約ベースとか関数ベースとかいろいろあって、我々は関数ベースに着目しているよ
Causal discovery methods can be categorized into three approaches: \emph{functional model-based}~\citep{shimizu2006,Hoyer09NIPS}, \emph{constraint-based}~\citep{Spirtes91PC,Spirtes95FCI}, and \emph{score-based}~\citep{chickering2002} methods.
They are not mutually exclusive and can be combined.
In this study, we focus on a non-linear functional model-based approach called the Causal Additive Model with Unobserved Variables (CAM-UV)~\citep{Maeda21UAI}.
Causal Additive Model (CAM)~\citep{Buhlmann14CAM} assumes non-linear causal functions where effects from each variable can be separated linearly, which is a special case of Additive Noise Model (ANM)~\citep{Hoyer09NIPS}.
CAM-UV identifies causal relationships as much as possible and detects latent causal/backdoor paths containing unobserved variables.
Leveraging the such ability of CAM-UV, we take a constraint-based approach to integrate CAM-UV results by a combinatorial search method.

%%% 段落3. 我々の手法に最も関連するライバルはHuang20で，〜〜が違うよ
CD-MiNi~\citep{Huang2020CD-MiNi} takes a functional model-based approach over multiple datasets with non-identical variable sets, similar to our method.
It assumes a linear non-Gaussian acyclic model (LiNGAM)~\citep{shimizu2006} and utilizes a continuous optimization algorithm based on the overcomplete independent component analysis to learn a causal graph~\citep{Ding2019OICA}.
While CD-MiNi handles linear causal relationships derived from LiNGAM, we handle non-linear causal relationships via CAM.

%%% 段落4. テクニックが微妙に似てる既存研究としてIONやCOmbINEがあるから一応紹介するけど，解いてる問題が違うよ（なので比較しないよ）
ION~\citep{Tillman2009ION}, IOD~\citep{Tillman2011IOD}, and COmbINE~\citep{Triantafillou2010COmbINE} focus on constraint-based approaches such as PC and FCI over multiple datasets with non-identical variable sets.
They consider partial ancestral graphs (PAGs) representing Markov equivalence classes (MECs) of causal graphs and enumerate all possible PAGs having structural consistency with PC or FCI results.
On the other hand, we consider enumerating not PAGs but full DAGs.
That is, while the existing methods output a PAG set implicitly representing DAGs of MECs, our method outputs a single explicit DAG set.

%%%%%%%%%%%%%%%%%%%%%%%%%%%%%%%%%%%%%%%%%%%%%%%%%%
% Preliminaries
%%%%%%%%%%%%%%%%%%%%%%%%%%%%%%%%%%%%%%%%%%%%%%%%%%
\section{Preliminaries}

% 変数集合と真の因果グラフがあって
Let $V = \{v_i\}_{i=1}^d$ be a variable set, and we consider causal relationships over $V$ as a graph structure.
Let $G^* = (V, A^*)$ be the ground truth causal graph, which is a DAG with directed edges $A^* \subset V^2$.
Each $(v_i, v_j) \in A^*$ means $v_i$ is a direct cause (parent) of $v_j$ and let $P^*_j := \{v_i \mid (v_i, v_j) \in A^*\}$ be the set of parents of $v_j$.

% DAGに従ってCAMが定義されて
We assume that the causal relationships are formulated by CAM: $v_i = \sum_{v_j \in P^*_i} f_j^{(i)}(v_j) + n_i$~\citep{Buhlmann14CAM} where each $f_j^{(i)}$ is a non-linear function, $n_i$ is the noise term on $v_i$, and $n_1, \ldots, n_d$ are independent of each other.
We have an algorithm to estimate $G^*$ on CAM from a given dataset of $V$.
However, in practice, unobserved variables $U \subset V$ may lead to incomplete estimation.

% 未観測変数があるときCAM-UVが定義される
Let $P_j := P^*_j \setminus U$ and $Q_j := P^*_j \cap U$.
CAM-UV is formulated as $v_i = \sum_{v_j \in P_i} f_j^{(i)}(v_j) + \sum_{v_k \in Q_i} f_k^{(i)}(v_k) + n_i$~\citep{Maeda21UAI}.
Given a dataset of $V \setminus U$, the estimation algorithm of CAM-UV aims to obtain a mixed graph
$G = (V \setminus U, A, N)$ where $N$ is an undirected edge set such that, if and only if $\{v_i, v_j\} \in N$ holds, the causal relationship between $v_i$ and $v_j$ is not identified due to the existence of an \emph{unobserved causal path} (UCP) or \emph{unobserved backdoor path} (UBP) between $v_i$ and $v_j$ on $G^*$.%a potential path between $v_i$ and $v_j$ on $G^*$.%

% UCPの定義
\begin{dfn}[Unobserved Causal Path (UCP)]
A path from $v_i$ to $v_j$ is called a unobserved causal path iff its form is $v_i \rightarrow \cdots \rightarrow v_k \rightarrow v_j$ where $v_k \in U$.
\end{dfn}

% UBPの定義
\begin{dfn}[Unobserved Backdoor Path (UBP)]
A path between $v_i$ and $v_j$ is called a unobserved backdoor path iff its form is $v_i \leftarrow v_x \leftarrow \cdots \leftarrow v_a \rightarrow \cdots \rightarrow v_y \rightarrow v_j$ where $v_x, v_y \in U$.
It is allowed to be $v_a = v_x$ and $v_a = v_y$.
\end{dfn}

% 図示
Figure \ref{fig:ucp-ubp} shows an illustration of UCP and UBP.
Figure \ref{fig:cam-uv} shows an example of CAM-UV result.

% Figure: UCP/UBP
\begin{figure}[t!]
\centering
\includegraphics[width=0.99\linewidth]{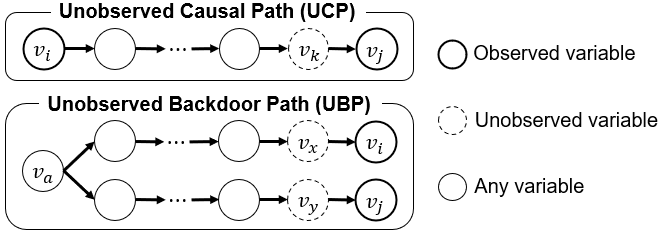}
\caption{Illustration of unobserved causal path (UCP) and unobserved backdoor path (UBP).}
\label{fig:ucp-ubp}
\end{figure}

% Figure: CAM-UV
\begin{figure}[t!]
\centering
\includegraphics[width=0.99\linewidth]{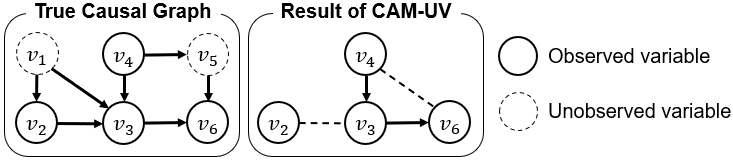}
\caption{
Example of CAM-UV.
A UCP $v_4 \rightarrow v_5 \rightarrow v_6$ and a UBP $v_2 \leftarrow v_1 \rightarrow v_3$ are found.
The dashed lines indicate undirected edges of the resulting mixed graph.
}
\label{fig:cam-uv}
\end{figure}

% Figure: inputs
\begin{figure}[t!]
\centering
\includegraphics[width=0.99\linewidth]{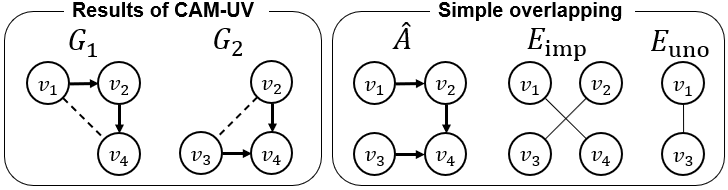}
\caption{
Example of inputs on a scenario we consider (left) and their simple overlapping (right).
}
\label{fig:inputs}
\end{figure}

%%%%%%%%%%%%%%%%%%%%%%%%%%%%%%%%%%%%%%%%%%%%%%%%%%
% Integrating CAM-UV
%%%%%%%%%%%%%%%%%%%%%%%%%%%%%%%%%%%%%%%%%%%%%%%%%%
\section{Integrating CAM-UV}
% 関連性のある複数のデータセットを分析するシナリオを考えて
We handle a scenario to estimate a causal graph from multiple datasets with non-identical variable sets via CAM-UV.
Given $m$ datasets of different variable sets $V_1, \ldots, V_m \subset V$ where $k \neq l \Leftrightarrow V_k \neq V_l$, we assume each variable set has a common variable with at least one other variable set.
Since, if some datasets share common sample sets, it is better to combine them into a single dataset, we assume the datasets also differ from each other in their sample sets.
Moreover, we assume all the datasets follow the same causal graph, but functional forms and data distributions may be different.
Regarding variables not in $V_k$ as unobserved for $k$-th dataset, CAM-UV provides a mixed graph $G_k = (V_k, A_k, N_k)$.
Our goal is to estimate a causal graph over $\hat{V} = \bigcup_{k=1}^m V_k$ from the given CAM-UV results $G_1, \ldots, G_m$.

% 単純な重ね合わせでは不足することがある
The identified causal relationships on $G_1, \cdots, G_m$ can be naively overlapped into a single graph:
Let $\hat{A} := \bigcup_{k=1}^m A_k$ and $\hat{G} := (\hat{V}, \hat{A})$.
However, the causal relationships for some variable pairs are not identified yet (see Figure \ref{fig:inputs} for an example):
$E_\mathrm{imp} := \left( \bigcup_{k=1}^m N_k \right) \setminus \{\{v_i, v_j\} \mid (v_i, v_j) \in \hat{A}\}$ and $E_\mathrm{uno} := \{\{v_i, v_j\} \subseteq \hat{V} \mid \forall k, \{v_i, v_j\} \not\subseteq V_k\}$.
The causal relationships on $E_\mathrm{imp}$ are impossible to identify in any dataset due to the existence of a UCP/UBP.
Moreover, since any variable pair in $E_\mathrm{uno}$ is not observed simultaneously in any dataset, it is even more challenging to identify their causal relationships.

%%%%%%%%%%%%%%%%%%%%%%%%%
% Ideal Situations
%%%%%%%%%%%%%%%%%%%%%%%%%
\subsection{Problem Definition for Ideal Situations}
% 各々の結果に矛盾しないDAGを見つけられたら嬉しい
To address the above limitations, we propose a novel approach named \emph{Integrating CAM-UV} (I-CAM-UV) that finds causal graphs satisfying consistency with respect to the given CAM-UV results.
First, we define the consistency of causal graphs based on UCP/UBP.
We then show that the ground truth causal graph satisfies the consistency.

Let $E := E_\mathrm{imp} \cup E_\mathrm{uno}$.
For any $k \in \{1, \ldots, m\}$, let $I_k := \{\{v_i, v_j\} \subseteq V_k \mid \{v_i, v_j\} \notin N_k\}$ be the variable pairs with the identified causal relationships on $V_k$.
Let $\mathrm{UP}_{G,V_k}(v_i,v_j)$ be the set of all the UCPs and UBPs between $v_i$ and $v_j$ on a graph $G$ over $\hat{V}$ where $U = \hat{V} \setminus V_k$.

\begin{dfn}
\label{dfn:consistent}
Given CAM-UV results $G_1, \ldots, G_m$, let $\tilde{A}$ be a directed edge set obtained by assigning directions to or excluding edges in $E$.
We call $\tilde{G} = (\hat{V}, \hat{A} \cup \tilde{A})$ consistent if and only if, for all $k \in \{1,\ldots,m\}$, $\mathrm{UP}_{\tilde{G},V_k}(v_i, v_j) = \emptyset$ holds for any $\{v_i, v_j\} \in I_k$ and $\mathrm{UP}_{\tilde{G},V_k}(v_i, v_j) \neq \emptyset$ holds for any $\{v_i, v_j\} \in N_k$.
\end{dfn}

We here state what \emph{ideal situations} are.
Given CAM-UV results $G_1, \ldots, G_m$, we are in an ideal situation if we have $\hat{V} = \bigcup_{k=1}^m V_k = V$, i.e., $\hat{V}$ has no unobserved variables, and $G_1, \ldots, G_m$ have no estimation error.

\begin{thm}
\label{thm:main-thm}
$G^*$ is consistent under any ideal situation.
\end{thm}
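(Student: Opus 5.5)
We need to show that $G^*$ can be written in the form $\tilde{G} = (\hat{V}, \hat{A}\cup\tilde{A})$ of Definition~\ref{dfn:consistent}, and that it meets the two UCP/UBP conditions for every $k$. The plan has two parts: first establish that $G^*$ is a legitimate candidate, then verify the path conditions dataset by dataset. Throughout we use the two ideal-situation hypotheses, namely $\hat{V}=V$ and that each $G_k$ is exactly the output CAM-UV is designed to produce.

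\textbf{Step 1 (admissibility).} Since $\hat{V}=V$, the graph $G^*$ lives on $\hat{V}$.

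First, $\hat{A}\subseteq A^*$. By the no-error assumption, each $A_k$ consists exactly of the true edges $(v_i,v_j)\in A^*$ with $v_i,v_j\in V_k$ whose relationship is identified. In particular, $\hat{A}$ contains no wrongly oriented or spurious edges.

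Second, every pair $\{v_i,v_j\}\subseteq \hat{V}$ that is not already covered by $\hat{A}$ lies in $E$. Suppose the pair is not in $E_\mathrm{uno}$. Then it lies inside some $V_k$, so it is in $I_k$ or in $N_k$.
\begin{itemize}
\item If it is in $N_k$ and not covered by $\hat{A}$, it belongs to $E_\mathrm{imp}$ by definition.
\item If it is in $I_k$, then its relationship is identified in dataset $k$. If the true relation is an edge, that edge appears in $A_k$, hence in $\hat{A}$, so the pair is covered. If the true relation is no edge, then no edge is needed.
\end{itemize}
One subtlety must be argued: an identified pair in $I_k$ that has no edge in $G^*$ might still appear in $E$ because it is in $N_{k'}$ for another dataset $k'$. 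This is harmless, since $E$ allows edges to be excluded.

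Therefore we may set $\tilde{A} := A^*\setminus\hat{A}$. Every edge of $\tilde{A}$ joins a pair in $E$, and it carries its true direction. This gives $\tilde{G}=G^*$.

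\textbf{Step 2 (path conditions).} Fix $k$ and set $U=\hat{V}\setminus V_k = V\setminus V_k$. Because $\hat{V}=V$, this is exactly the set of variables unobserved in dataset $k$. Hence $\mathrm{UP}_{G^*,V_k}(v_i,v_j)$ is precisely the set of UCPs and UBPs between $v_i$ and $v_j$ on $G^*$ in the sense of the CAM-UV model for dataset $k$.

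The CAM-UV output is characterized by a biconditional: $\{v_i,v_j\}\in N_k$ if and only if there is a UCP or UBP between $v_i$ and $v_j$ on $G^*$. Under no estimation error, this biconditional holds for $G_k$. It immediately yields $\mathrm{UP}\neq\emptyset$ for pairs in $N_k$. Taking the contrapositive for pairs in $I_k$, which are pairs inside $V_k$ not in $N_k$, yields $\mathrm{UP}=\emptyset$.

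We also use the assumption that all datasets follow the same causal graph $G^*$, even though functional forms may differ. This ensures that the same $G^*$ works for every $k$ simultaneously.

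\textbf{Main obstacle.} The delicate point is Step 2's identification of the unobserved set. We must make sure that no variable outside $\hat{V}$ could create UCPs or UBPs that are invisible in $\mathrm{UP}_{G^*,V_k}$. The hypothesis $\hat{V}=V$ settles this exactly.

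A secondary check is Step 1's bookkeeping: we must confirm that every edge of $A^*$ not in $\hat{A}$ sits on a pair in $E$. I expect the case analysis above to handle this cleanly.
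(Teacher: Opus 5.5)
Your proposal is correct and follows essentially the same route as the paper's proof. The paper likewise reads off $\hat A\subseteq A^*$, the membership in $E$ of every pair carrying an edge of $A^*\setminus\hat A$, and the two $\mathrm{UP}$ conditions directly from the no-error assumption, using $\hat V=V$ so that $\hat V\setminus V_k$ is exactly the unobserved set of dataset $k$. Your case analysis for the second fact is more explicit than the paper's bare assertion, but note that it establishes that every pair carrying an edge of $A^*\setminus\hat A$ lies in $E$, not (as the opening sentence of that paragraph says) that every pair uncovered by $\hat A$ does.
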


Theorem \ref{thm:main-thm} implies that the ground truth causal graph corresponds to a consistent DAG.
However, because there may exist multiple consistent DAGs, I-CAM-UV aims to solve the following problem.
\begin{prb}
\label{prb:ideal}
Given CAM-UV results $G_1, \ldots, G_m$, the problem is to enumerate all the consistent DAGs.
\end{prb}

In other words, given a set of mixed graphs $G_1, \ldots, G_m$, it implicitly represents a set of consistent DAGs $\mathcal{G}$, and we aim to explicitly enumerate them $\tilde{G}_1, \ldots, \tilde{G}_{|\mathcal{G}|} \in \mathcal{G}$.
From this perspective, I-CAM-UV might appear only to extract the implicitly represented DAGs.
However, a crucial aspect of I-CAM-UV lies in its ability to orient unobserved variable pairs, $E_\mathrm{uno}$, which allows for the identification of the causal relationships between variables where the direct estimation via observed datasets is difficult.

% Figure: Ideal Examples
\begin{figure}[t!]
\centering
\includegraphics[width=0.99\linewidth]{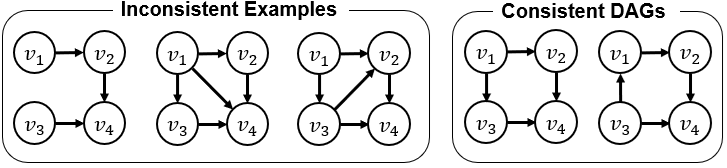}
\caption{
Example of I-CAM-UV on the input of Figure \ref{fig:inputs}.
}
\label{fig:outputs}
\end{figure}

\begin{figure}[t!]
\centering
\includegraphics[width=0.99\linewidth]{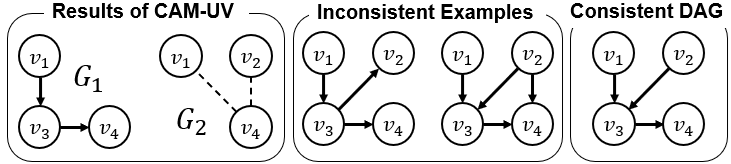}
\caption{
Example of I-CAM-UV resulting a unique DAG.
}
\label{fig:unique}
\end{figure}

\begin{exm}
When the input is the form shown in Figure \ref{fig:inputs}, I-CAM-UV works as shown in Figure \ref{fig:outputs}.
Focusing on $G_1$, we need $v_1 \rightarrow v_3$ or $v_1 \leftarrow v_3$ to obtain the UCP $v_1 \rightarrow v_3 \rightarrow v_4$ or the UBP $v_1 \leftarrow v_3 \rightarrow v_4$.
Focusing on $G_2$, we exclude $\{v_1, v_4\}$ to avoid the UBP $v_2 \leftarrow v_1 \rightarrow v_4$.
A similar discussion holds when $G_1$ and $G_2$ are swapped.
As a result, we obtain two consistent DAGs that have a directed edge between the unobserved variable pair $\{v_1, v_3\}$.
\end{exm}

\begin{exm}
In Figure \ref{fig:unique}, focusing on $G_2$, while $G_1$ induces the UCP $v_1 \rightarrow v_3 \rightarrow v_4$, we still need the UCP $v_2 \rightarrow v_3 \rightarrow v_4$ or the UBP $v_2 \leftarrow v_3 \rightarrow v_4$.
We fix $v_2 \rightarrow v_3$ to avoid the UCP $v_1 \rightarrow v_3 \rightarrow v_2$.
We then exclude $(v_2, v_4)$ to avoid the UBP $v_3 \leftarrow v_2 \rightarrow v_4$ with respect to $G_1$.
As a result, we obtain a unique consistent DAG which has a directed edge between the unobserved variable pair $\{v_2, v_3\}$.
\end{exm}

%%%%%%%%%%%%%%%%%%%%%%%%%
%  Realistic Situations
%%%%%%%%%%%%%%%%%%%%%%%%%
\subsection{Problem Relaxation for Realistic Situations}
% 推定ミスを許与する仕組みも取り入れる
While I-CAM-UV works well under any ideal situations, in practical use cases, unobserved variables may still exist ($\hat{V} \subset V$) and CAM-UV may cause estimation errors.
These may not yield any consistent graph.
In fact, estimating CAM-UV is currently not complete in theory to obtain proper mixed graphs, although the first algorithm~\cite{Maeda21UAI} is modified in the literature~\cite{pham2025camuvx} to improve accuracy.
Therefore, we define a new problem with a relaxed graph consistency requirement by introducing an \emph{inconsistency cost} which counts the number of variable pairs with inconsistent (non-)existence of UCP/UBP on each $V_k$.

\begin{dfn}
\label{dfn:inconsistency-cost}
For any graph $\tilde{G}$, to represent inconsistent variable pairs on $\tilde{G}$ for each $k$, let $\bar{I}_k(\tilde{G}) := \{\{v_i, v_j\} \in I_k \mid \mathrm{UP}_{\tilde{G},V_k}(v_i,v_j) \neq \emptyset\}$ and $\bar{N}_k(\tilde{G}) := \{\{v_i, v_j\} \in N_k \mid \mathrm{UP}_{\tilde{G},V_k}(v_i,v_j) = \emptyset\}$.
We define the inconsistency cost of $\tilde{G}$ as $C(\tilde{G}) := \sum_{k=1}^m (|\bar{I}_k(\tilde{G})| + |\bar{N}_k(\tilde{G})|)$.
\end{dfn}

\begin{prb}
\label{prb:relaxed}
Given CAM-UV results $G_1, \ldots, G_m$ and the user parameter $b \in \mathbb{Z}_{\geq 0}$, the problem is to enumerate all the DAGs with the inconsistency cost less than or equal to $C^* + b$ where $C^*$ is the minimum inconsistency cost of DAGs.
\end{prb}

\begin{exm}
In Figure \ref{fig:realistic}-(a), we have the estimation error of $\{v_1,v_2\} \in N_2$ on CAM-UV results because the ground truth has no edge between $v_1$ and $v_2$.
This case gives three DAGs with the minimum inconsistency cost of 1.
All the DAGs are inconsistent for $\{v_1,v_2\} \in N_2$ because no UCP/UBP between $v_1$ and $v_2$ will never occur.
However, I-CAM-UV can recover the ground truth DAG as the rightmost DAG.
Note that I-CAM-UV can not identify which of the enumerated graphs is the ground truth.
\end{exm}

\begin{exm}
\label{ex:can-not-recover}
In Figure \ref{fig:realistic}-(b), we show an example such that I-CAM-UV can not recover the ground truth DAG even though we solve the relaxed problem.
We have the estimation error of $(v_3, v_4) \notin A_1$ where $(v_3, v_4) \in A_1$ is true.
Nevertheless, this case has three false DAGs with the inconsistency cost of 0.
Moreover, even if we seek DAGs with non-zero inconsistency costs, the ground truth DAG does not appear because $(v_3, v_4) \notin \hat{A}$ and $\{v_3, v_4\} \notin E$.
\end{exm}

% Figure: Realistic Example
\begin{figure}[t!]
\centering
\includegraphics[width=0.99\linewidth]{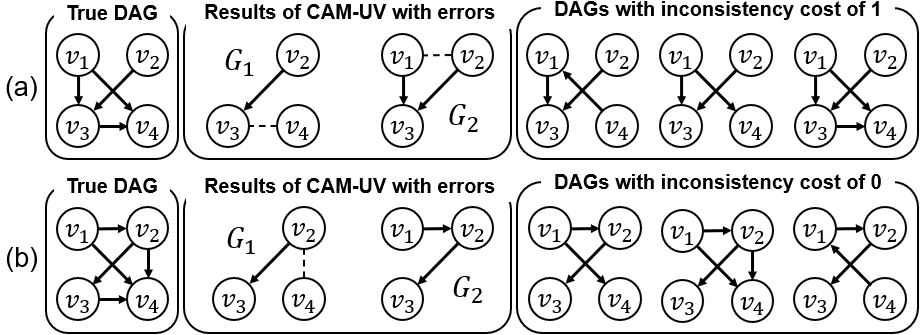}
\caption{
Examples of I-CAM-UV with relaxation.
}
\label{fig:realistic}
\end{figure}

Similar to Example \ref{ex:can-not-recover}, the ground truth DAG also cannot be recovered when CAM-UV estimates $(v_i, v_j) \in A_k$ where $(v_i, v_j) \notin A^*$.
Therefore, considering I-CAM-UV can assign directions to or exclude only edges in $E$, we need the following condition to recover the ground truth DAG:
% \begin{cond}
The input of I-CAM-UV satisfies $\hat{A} \subseteq A^*$, $A^* \setminus \hat{A} \subseteq \{(v_i,v_j) \mid \{v_i,v_j\} \in E\}$, and $C(G^*) \leq C^* + b$.
% \end{cond}

That is, it is desirable that CAM-UV estimation algorithms make no false discovery and capture a trace of each true causal relationship, even if it forms a falsely estimated UCP/UBP.
However, such desirable conditions are not always satisfied.
Therefore, in practical settings, we only require that CAM-UV algorithms yield an acyclic overlapped graph $\hat{G}$.
It is easily achieved by sharing estimated directed edges among all the datasets to avoid cycles.
Under the above setting, we handle Problem \ref{prb:relaxed} instead of Problem \ref{prb:ideal}.

%%%%%%%%%%%%%%%%%%%%%%%%%%%%%%%%%%%%%%%%%%%%%%%%%%
% Algorithm: I-CAM-UV
%%%%%%%%%%%%%%%%%%%%%%%%%%%%%%%%%%%%%%%%%%%%%%%%%%
\section{Algorithm}
% 概要
Now let us describe the proposed algorithm of I-CAM-UV to solve Problem \ref{prb:relaxed}.
I-CAM-UV essentially requires the consideration of an exponential number of graphs via all the connection patterns on $E$, i.e., we assign the direction to or exclude each $\{v_i, v_j\} \in E$, and there are $3^{|E|}$ such patterns.
Although the algorithm ignores cyclic graphs, there are still a huge number of DAGs to be considered.
Therefore, we introduce the key idea to reduce the computational cost by focusing on the monotonicity of the inconsistency cost.
That leads to designing a best-first search algorithm that enumerates DAGs in ascending order of the inconsistency cost.

% 全探索を考える
Let the variable pairs $E := \{e_1, e_2, \ldots, e_{|E|}\}$ be ordered.
We can exhaustively search all the DAGs by sequentially processing the variable pairs.
Let us denote a graph generated by adding a directed edge $(v_i,v_j)$ into $\tilde{G}$ as $\tilde{G}_{+(v_i,v_j)}$.
We manage each search state as a pair $(t, \tilde{G})$ where $t$ is the number of processed variable pairs and $\tilde{G}$ is a candidate solution graph.
First, we construct the overlapped DAG $\hat{G}$ and push the initial state $(0, \hat{G})$ into the top of the search queue.
After that, we iteratively pop a state $(t, \tilde{G})$ from the search queue to generate a finalized state $(|E|, \tilde{G})$ and the new states $(s+1, \tilde{G}_{+(v_i,v_j)})$ and $(s+1, \tilde{G}_{+(v_j,v_i)})$ for each $e_s = \{v_i, v_j\}$ where $s > t$.
We push the generated states into the top of the search queue, ignoring cyclic graphs.

% lower bound の導入
The above procedure seeks all the DAGs, not taking into account the inconsistency cost.
On the other hand, we are interested in only the graphs with the inconsistency costs of less than or equal to $C^* + b$.
Consequently, we have a chance to reduce the computation time by seeking limited DAGs.
Intuitively, a faster computation is achieved by popping states in ascending order of the minimum inconsistency cost of the successor DAGs until a popped state has a cost greater than $C^* + b$.
However, it is difficult to compute the minimum inconsistency cost of the successor DAGs.
Therefore, we use a lower bound of the inconsistency cost as an alternative.
%%%
\begin{dfn}
For any state $(t, \tilde{G})$ where $\tilde{G} = (\hat{V}, \hat{A} \cup \tilde{A})$, let $A_t := \bigcup_{e_s = \{v_i, v_j\} \in E, s > t}\{(v_i, v_j), (v_j, v_i)\}$ and $\tilde{G}[t] := (\hat{V}, \hat{A} \cup \tilde{A} \cup A_t)$.
That is, $\tilde{G}[t]$ has the bi-directed edge for each $e_s$ where $s > t$.
Note that UCP and UBP are well-defined even on cyclic graphs.
We define the cost function $\tilde{C}(t, \tilde{G}) := \sum_{k=1}^m (|\bar{I}_k(\tilde{G})| + |\bar{N}_k(\tilde{G}[t])|)$.
\end{dfn}
%%%
\begin{lem}
\label{lem:equality}
$C(\tilde{G}) = \tilde{C}(|E|, \tilde{G})$ holds for any state $(|E|, \tilde{G})$.
\end{lem}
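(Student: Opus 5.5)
The plan is to unfold both definitions and compare them term by term. By Definition~\ref{dfn:inconsistency-cost},
\[
C(\tilde{G}) = \sum_{k=1}^m \bigl(|\bar{I}_k(\tilde{G})| + |\bar{N}_k(\tilde{G})|\bigr),
\qquad
\tilde{C}(|E|, \tilde{G}) = \sum_{k=1}^m \bigl(|\bar{I}_k(\tilde{G})| + |\bar{N}_k(\tilde{G}[|E|])|\bigr).
\]
The $\bar{I}_k$ terms are literally identical. So it suffices to show that $\tilde{G}[|E|] = \tilde{G}$ as graphs. Then $\bar{N}_k(\tilde{G}[|E|]) = \bar{N}_k(\tilde{G})$ for every $k$, because $\bar{N}_k$ depends only on the graph through the sets $\mathrm{UP}_{\cdot,V_k}(v_i,v_j)$.

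The key step is to compute $A_t$ at $t = |E|$. By definition,
\[
A_{|E|} = \bigcup_{e_s \in E,\, s > |E|} \{(v_i,v_j),(v_j,v_i)\}.
\]
The pairs are indexed $e_1,\ldots,e_{|E|}$, so no index $s$ satisfies $s > |E|$. The union is therefore empty, and $A_{|E|} = \emptyset$. This gives
\[
\tilde{G}[|E|] = (\hat{V}, \hat{A}\cup\tilde{A}\cup\emptyset) = (\hat{V}, \hat{A}\cup\tilde{A}) = \tilde{G}.
\]

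Substituting this into the expression for $\tilde{C}$ and summing over $k$ yields $C(\tilde{G}) = \tilde{C}(|E|,\tilde{G})$. There is no real obstacle here. The only point to be careful about is that $\mathrm{UP}_{G,V_k}$ is a function of the edge set alone, so equal graphs give equal UCP/UBP sets. The remark that UCP/UBP are well-defined on cyclic graphs is not even needed at $t=|E|$, since no bi-directed edges are added.
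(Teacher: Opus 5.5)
Your proposal is correct and follows the same route as the paper's proof. Both observe that $A_{|E|}=\emptyset$, so $\tilde{G}[|E|]=\tilde{G}$; the $\bar{N}_k$ terms of $C$ and $\tilde{C}$ therefore coincide, and the $\bar{I}_k$ terms are identical by definition.
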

%
%%%
\begin{thm}
\label{thm:monotonicity}
For any state $(t, \tilde{G})$ and any its successor state $(s, \tilde{G}')$, $\tilde{C}(t, \tilde{G}) \leq \tilde{C}(s, \tilde{G}')$ holds.
\end{thm}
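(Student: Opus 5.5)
The plan is to show that each of the two terms in $\tilde{C}$ is separately monotone along successor relations. It suffices to treat a single generation step, since monotonicity then follows by induction along any chain of successors. A successor $(s,\tilde{G}')$ of $(t,\tilde{G})$ arises in one of two ways. Either it is the finalized state $(|E|,\tilde{G})$ with the same edge set, or it is $(s,\tilde{G}_{+(v_i,v_j)})$ for some $e_s=\{v_i,v_j\}$ with $s>t$. In both cases I would first record two edge-set inclusions. The first is $\hat{A}\cup\tilde{A}\subseteq \hat{A}\cup\tilde{A}'$, because edges are only added. The second is $\hat{A}\cup\tilde{A}'\cup A_s \subseteq \hat{A}\cup\tilde{A}\cup A_t$. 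This holds because $A_s\subseteq A_t$ when $s\ge t$, and the newly added edge $(v_i,v_j)$ comes from $e_s$ with $s>t$, so it already lies in $A_t$. Thus $\tilde{G}\subseteq \tilde{G}'$ and $\tilde{G}'[s]\subseteq \tilde{G}[t]$ as edge sets on the common vertex set $\hat{V}$.

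The key structural fact I then need is that the existence of UCPs and UBPs is monotone under adding edges. Take $G_1\subseteq G_2$ on the same vertex set with the same $U=\hat{V}\setminus V_k$. Any path of the form in the definition of UCP or UBP in $G_1$ is, edge by edge, also such a path in $G_2$. The defining conditions refer only to edge orientations and to membership of the specific vertices $v_k$, $v_x$, $v_y$ in $U$, and these are unchanged. Hence $\mathrm{UP}_{G_1,V_k}(v_i,v_j)\subseteq \mathrm{UP}_{G_2,V_k}(v_i,v_j)$. The remark that UCP and UBP are well-defined on cyclic graphs is what allows me to apply this to $\tilde{G}[t]$, which contains bidirected pairs. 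I expect this to be the only step requiring care. One must check that nothing in the path definitions implicitly uses acyclicity, for example a requirement that paths be simple, that could fail differently in the larger graph. Since a path in the smaller graph remains a path in the larger one, such a requirement does no harm.

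With this in hand, the two terms follow directly. For $\bar{I}_k$: if $\{v_i,v_j\}\in I_k$ has a nonempty $\mathrm{UP}$ in $\tilde{G}$, it has a nonempty one in the supergraph $\tilde{G}'$, so $\bar{I}_k(\tilde{G})\subseteq\bar{I}_k(\tilde{G}')$. For $\bar{N}_k$: if $\{v_i,v_j\}\in N_k$ has empty $\mathrm{UP}$ in $\tilde{G}[t]$, then it also has empty $\mathrm{UP}$ in the subgraph $\tilde{G}'[s]$. This is the contrapositive of monotonicity, and it gives $\bar{N}_k(\tilde{G}[t])\subseteq\bar{N}_k(\tilde{G}'[s])$. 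Taking cardinalities and summing over $k$ yields $\tilde{C}(t,\tilde{G})\le\tilde{C}(s,\tilde{G}')$. The finalized-state case is the degenerate instance with $\tilde{A}'=\tilde{A}$ and $A_{|E|}=\emptyset\subseteq A_t$, which is covered by the same inclusions.

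Finally, to pass from one-step to arbitrary successors I would induct on the number of generation steps. This uses transitivity of $\le$ and the fact that each generated state's index $s$ is at least its parent's index.
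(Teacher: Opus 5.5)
Your proposal is correct and follows the same route as the paper. The paper also argues that UCP/UBP sets are monotone under edge addition, applying this once to $\tilde{G}\subseteq\tilde{G}'$ for the $\bar{I}_k$ term and once to $\tilde{G}'[s]\subseteq\tilde{G}[t]$ for the $\bar{N}_k$ term; your explicit check of the second inclusion (via $A_s\subseteq A_t$ and the new edge from $e_s$ with $s>t$ already lying in $A_t$) spells out what the paper leaves implicit in its remark about edge removals.
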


According to Lemma \ref{lem:equality}, the cost function $\tilde{C}$ can naturally evaluate the inconsistency cost of finalized DAGs.
Moreover, according to Theorem \ref{thm:monotonicity}, we have the monotonicity of the cost function $\tilde{C}$.
We then propose to use the heap that prioritizes the states by ascending order of $\tilde{C}(t, \tilde{G})$.
As a result, the algorithm forms a best-first search, enumerates the DAGs in ascending order of the inconsistency cost, and is expected to reduce the computation time by seeking heuristically limited DAGs.
However, we have the remaining challenge to compute $\tilde{C}(t, \tilde{G})$.
It is equal to computing $\bar{I}_k(\tilde{G})$ and $\bar{N}_k(\tilde{G}[t])$ for all $k$ by searching UCPs/UBPs.
Fortunately, we can construct polynomial-time algorithms to search a UCP/UBP on a given graph as follows.

% メイン
\begin{algorithm}[t!]
\caption{I-CAM-UV}
\label{alg:main}
\begin{algorithmic}[1]
\STATE Compute $\hat{G}$ and $E\leftarrow E_\mathrm{imp} \cup E_\mathrm{uno}$
\STATE Set an edge order $E = \{e_1, e_2, \ldots, e_{|E|}\}$
\STATE Construct the heap $\mathcal{Q}$ with the priority $\tilde{C}$
\STATE Initialize $\mathcal{Q} \leftarrow \{(0, \hat{G})\}$, $\mathcal{G} \leftarrow \emptyset$, and $C^* \leftarrow \inf$
\WHILE{$\mathcal{Q}$ is not empty}
    \STATE Pop the top state $(t, \tilde{G})$ from $\mathcal{Q}$
    \IFLINE{$\tilde{C}(t, \tilde{G}) > C^* + b$}{break}
    \IF{$t = |E|$}
        \IFLINE{$C^* = \inf$}{$C^* \leftarrow C(\tilde{G})$}
        \STATE $\mathcal{G} \leftarrow \mathcal{G} \cup \{\tilde{G}\}$
    \ELSE
        \FOR{$s=t+1,\ldots,|E|$}
            \STATE $v_i, v_j \leftarrow e_s$
            \STATE $\tilde{G}_1 \leftarrow \tilde{G}_{+(v_i,v_j)}$ and $\tilde{G}_2 \leftarrow \tilde{G}_{+(v_j,v_i)}$
            \IFLINE{$\tilde{G}_1$ is a DAG}{Push $(s, \tilde{G}_1)$ into $\mathcal{Q}$}% with the priority $\tilde{C}(s, \tilde{G}_1)$}
            \IFLINE{$\tilde{G}_2$ is a DAG}{Push $(s, \tilde{G}_2)$ into $\mathcal{Q}$}% with the priority $\tilde{C}(s, \tilde{G}_2)$}
        \ENDFOR
        \STATE Push $(|E|, \tilde{G})$ into $\mathcal{Q}$% with the priority $\tilde{C}(|E|, \tilde{G})$
    \ENDIF
\ENDWHILE
\RETURN $\mathcal{G}$
\end{algorithmic}
\end{algorithm}

% UCPの探索
\paragraph{Search a UCP}
Given a graph $G = (\hat{V}, A)$ and observed variables $W \subset \hat{V}$, we show an algorithm to search a UCP of $\{v_i,v_j\} \subseteq W$ on $G$.
Let $T_i := \{v_k \in \hat{V} \setminus W \mid (v_k, v_i) \in A\}$ and $T_j := \{v_k \in \hat{V} \setminus W \mid (v_k, v_j) \in A\}$ be the unobserved variables of one hop before $v_i$ and $v_j$ respectively.
A path from $v_i$ (resp. $v_j$) to $v_k \in T_j$ not passing through $v_j$ (resp. $v_i$) makes a UCP $v_i \rightarrow \cdots \rightarrow v_k \rightarrow v_j$ (resp. $v_j \rightarrow \cdots \rightarrow v_k \rightarrow v_i$).
It is found in $O(|A|)$ time by a breadth-first search.

% UBPの探索
\paragraph{Search a UBP}
Given a graph $G = (\hat{V}, A)$ and observed variables $W \subset \hat{V}$, we show an algorithm to search a UBP of $\{v_i,v_j\} \subseteq W$ on $G$.
We define $T_i$ and $T_j$ as in the case of UCP.
Moreover, let $\mathbf{1}^G_{v_iv_j}[u, w]$ indicates whether $G$ has a path from $u$ to $w$ not passing through $v_i$ and $v_j$.
Let $S_i := \{v_a \in \hat{V} \mid \exists v_x \in T_i, \mathbf{1}^G_{v_iv_j}[v_a,v_x]\}$ and $S_j := \{v_a \in \hat{V} \mid \exists v_y \in T_j, \mathbf{1}^G_{v_iv_j}[v_a,v_y]\}$.
We have a UBP between $v_i$ and $v_j$ if and only if $S_i \cap S_j \neq \emptyset$, i.e., there is a path $v_i \leftarrow v_x \leftarrow \cdots \leftarrow v_a \rightarrow \cdots \rightarrow v_y \rightarrow v_j$ where $v_a \in S_i \cap S_j$ and $v_x, v_y \in \hat{V} \setminus W$.
We can compute $S_i$ and $S_j$ in $O(|A|)$ time by a breadth-first search.

% 疑似コードを示す
\paragraph{Pseudocode}
Algorithm \ref{alg:main} summarizes the best-first search procedure of I-CAM-UV.
The algorithm initializes the heap with the state consisting of the overlapped DAG (line 4), iteratively pops the top state (line 6), and checks the stop condition (line 7).
If the algorithm continues, it adds the DAG of the current state into the solution set if $t = |E|$ (line 10), otherwise pushes the next states into the heap, computing their priority (line 12-17).
Note that the algorithm computes $C^*$ when the first solution DAG is obtained (line 9).

% 計算量
\paragraph{Computational Complexity}
The proposed I-CAM-UV algorithm has computational complexity depending on the number of searched states.
The algorithm searches states in ascending order of $\tilde{C}$, stops when the popped state satisfies $\tilde{C}(t, \tilde{G}) > C^* + b$, and generates up to $2(|E| - t) + 1$ successors for each state.
Thus, let $Q$ be the number of the possible states with $\tilde{C}(t, \tilde{G}) \leq C^* + b$, and we have $O(Q|E|)$ searched states.
On computing $\tilde{C}$, because we need to search a UCP/UBP for each variable pairs on all the datasets, the algorithm takes $O((|\hat{A}| + |E|)\sum_{k=1}^m |V_k|^2)$ time for each state where $|\hat{A}| + |E|$ is the maximum possible graph size.
Therefore, let $R = (|\hat{A}| + |E|)\sum_{k=1}^m |V_k|^2$ for short, and the proposed best-first search algorithm via a heap takes $O(Q|E|(R + \log(Q|E|))$ time.

%%%%%%%%%%%%%%%%%%%%%%%%%%%%%%%%%%%%%%%%%%%%%%%%%%
% Experiments
%%%%%%%%%%%%%%%%%%%%%%%%%%%%%%%%%%%%%%%%%%%%%%%%%%
\section{Experiments}
We conducted experiments to demonstrate the usefulness of I-CAM-UV.
We posed the following four questions and answered each of them through experiments:
\begin{itemize}
\item Q1. How accurately does I-CAM-UV recover CAM-UV misses on observed variable pairs?
\item Q2. How accurately does I-CAM-UV discover causal relationships on unobserved variable pairs $E_\mathrm{uno}$.
\item Q3. How many DAGs does I-CAM-UV enumerate, and how does the accuracy distribute on the DAGs?
\item Q4. How realistically can I-CAM-UV operate in terms of computation times?
\end{itemize}

We implemented the core engine for the search procedure of I-CAM-UV in C++ and wrapped it with Cython.
The finalized module of I-CAM-UV and other experimental modules were implemented by Python.
The computer environment consisted of Ubuntu 24.04.1 LTS, Intel(R) Xeon(R) E-2174G CPU, and 64GB RAM.

%%%%%%%%%%%%%%%%%%%%%%%%%%%%%%
% Setup
%%%%%%%%%%%%%%%%%%%%%%%%%%%%%%
\subsection{Setup}
%% データセット
\paragraph{Datasets}
We generated 100 synthetic datasets following CAM based on random graphs of Erd\H{o}s–R\'{e}nyi model~\cite{Erdos-Renyi} with ten variables, the edge probability parameter $0.3$, and some non-linear functions.
The same setting is used in the existing CAM-UV literatures~\cite{Maeda21UAI,pham2025camuvx}.
From each dataset, we constructed a instance by randomly resampling datasets with non-identical variable sets where the number of datasets was $m \in \{2,3\}$ and the number of unobserved variables per dataset was $|U| \in \{3, 4\}$.
In each instance, at least $|U|$ variable pairs with causal relationships were simultaneously observed (resp. unobserved).
Moreover, each dataset was generated to have at least one common observed variable with one of the other datasets.
The number of observations sampled per dataset was 1,000.

%% 手法群
\paragraph{Competitors}
We set $b = 0$ for I-CAM-UV, i.e., we enumerated the DAGs with the minimum inconsistency cost.
In addition, we compared I-CAM-UV with the following methods:
(\emph{CAM-UV-OVL}) We used the simple overlapped DAG of CAM-UV results as a baseline.
(\emph{PC-OVL}) We applied the PC algorithm to each dataset and overlapped the estimated directed edges.
We took care to make the overlapped graph acyclic by ignoring later added edges if they make a cycle.
(\emph{Imputation}) Because unobserved variables can be treated as missing values, we combined the datasets into one by $k$-Nearest Neighbor imputation of $k=5$.
After the imputation, we applied CAM to the combined dataset with randomly resampling 1,000 observations to reduce the computational cost.
(\emph{CD-MiNi}) Although we focused on CAM, we directly applied CD-MiNi that estimates a LiNGAM on multiple datasets with non-identical variable sets.

% 評価指標
\paragraph{Metrics}
Since I-CAM-UV outputs multiple DAGs, basic metrics cannot be used directly.
Therefore, focusing on the frequency of edge existence on output DAGs, we modified true positive, false positive, and false negative, and called them MTP, MFP, and MFN.
For a graph set $\mathcal{G}$ and two different variables $v_i$ and $v_j$, let $c_{ij}$ be the number of graphs in $\mathcal{G}$ having the directed edge $(v_i,v_j)$.
With respect to the ground truth causal graph of the directed edge set $A^*$, we define $\mathrm{MTP} = \sum_{(v_i,v_j) \in A^*} \frac{c_{ij}}{|\mathcal{G}|}$, $\mathrm{MFP} = \sum_{(v_i,v_j) \notin A^*} \frac{c_{ij}}{|\mathcal{G}|}$, and $\mathrm{MFN} = \sum_{(v_i,v_j) \in A^*} \frac{|\mathcal{G}| - c_{ij}}{|\mathcal{G}|}$.
We then used the modified variations of recall $\frac{\mathrm{MTP}}{\mathrm{MTP} + \mathrm{MFN}}$, precision $\frac{\mathrm{MTP}}{\mathrm{MTP} + \mathrm{MFP}}$, and their harmonic mean (F1 score).
All the modified metrics are equal to the original ones when the graph set is a singleton.
Therefore, the modified metrics can be applied to all the competitors without any issues.

%%% accuracies
\begin{figure}[!t]
\centering
\includegraphics[width=0.99\linewidth]{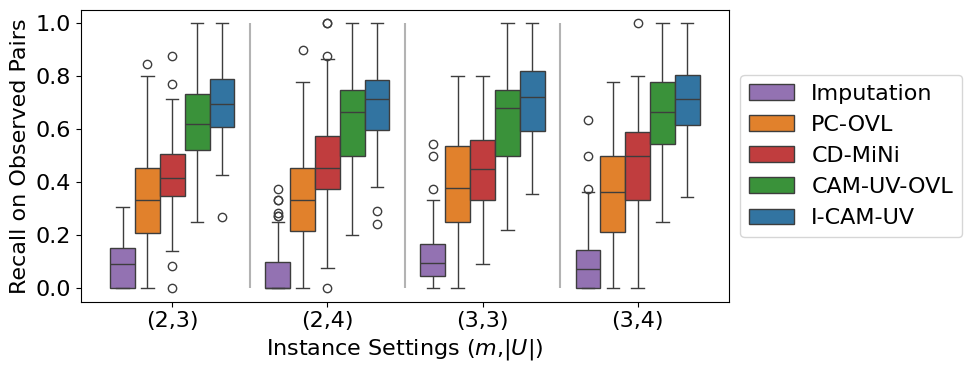}
\centering
\includegraphics[width=0.99\linewidth]{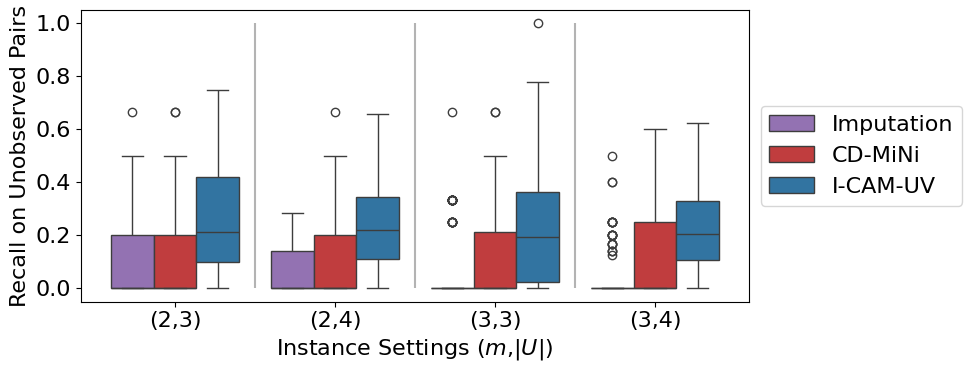}\\
\centering
\includegraphics[width=0.99\linewidth]{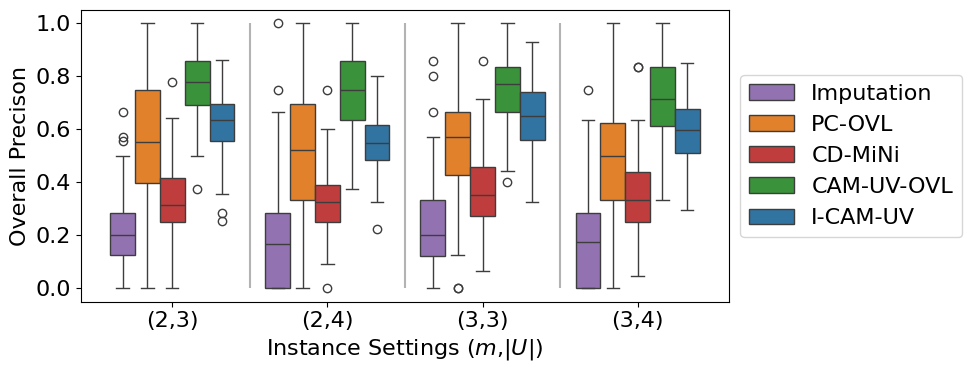}\\
\centering
\includegraphics[width=0.99\linewidth]{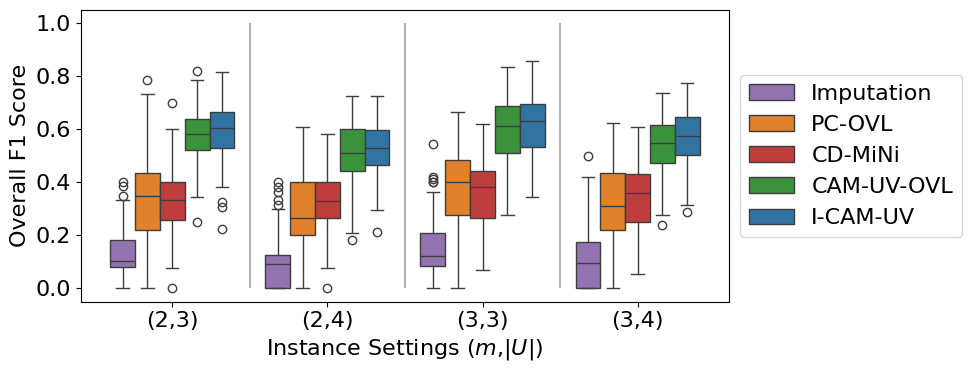}\\
\caption{Accuracies on synthetic datasets.}
\label{fig:accuracies}
\end{figure}

%%%%%%%%%%%%%%%%%%%%%%%%%%%%%%
% Results for Q1 and Q2
%%%%%%%%%%%%%%%%%%%%%%%%%%%%%%
\subsection{Results}% for Q1 and Q2}
\paragraph{Regarding Q1 and Q2}
To answer Q1 and Q2, we individually evaluated the recall scores on observed and unobserved variable pairs.
In addition, we also evaluated the overall precision and F1 scores.
Figure \ref{fig:accuracies} summarizes the results.

We found that the recall scores of I-CAM-UV were superior to all the competitors on both observed and unobserved variable pairs.
On the other hand, I-CAM-UV decreased the precision scores from that of CAM-UV-OVL.
Nonetheless, the overall F1 scores showed almost no difference between CAM-UV-OVL and I-CAM-UV.
While the results support the ability of I-CAM-UV to discover causal relationships, which is challenging to identify by only analyzing observed samples, the results also indicate that I-CAM-UV outputs a similar number of false discoveries.

Consequently, conclusions regarding Q1 and Q2 are as follows:
(A1 and A2) \emph{I-CAM-UV has superior recall compared with the other methods to recover causal relationships missed by CAM-UV and discover causal relationships on unobserved variable pairs. However, we need to take into account that the output DAGs may contain a similar number of false discoveries.}

%%%%%%%%%%%%%%%%%%%%%%%%%%%%%%
% Results for Q3
%%%%%%%%%%%%%%%%%%%%%%%%%%%%%%
\paragraph{Regarding Q3}
To answer Q3, we investigated the number of DAGs enumerated by I-CAM-UV and their individual accuracies.
The left one of Figure \ref{fig:num-and-cost} shows statistics on the number of DAGs enumerated by I-CAM-UV.
We selected some instances with a moderate number of output DAGs and plotted their accuracy distributions as shown in Figure \ref{fig:distribution}.

We found that the number of DAGs enumerated by I-CAM-UV could be both little and huge.
Since the behavior of I-CAM-UV depends on CAM-UV results, it is difficult to predict the output size.
Fortunately, the accuracy distribution seems to form a single cluster, that is, the enumerated DAGs are mostly biased toward similar accuracies.
These cluster-like accuracy distributions have been observed in many instances regardless the number of DAGs.
In addition, as a similar result of the consequences leading to A1 and A2, the accuracy of each DAG enumerated by I-CAM-UV is often superior to the competitors in terms of the recall.
Therefore, in practical applications, even if a huge number of DAGs are obtained by I-CAM-UV, a random sampling will narrow them down to a small alternative DAG set.

In conclusion regarding Q3, (A3) \emph{while a wide range of numbers of DAGs are output by I-CAM-UV, almost all DAGs achieve similar accuracy. A random sampling approach may be effective for practical analysis even if a huge number of DAGs are obtained.}

%%%%%%%%%%%%%%%%%%%%%%%%%%%%%%
% Results for Q4
%%%%%%%%%%%%%%%%%%%%%%%%%%%%%%
% \subsection{Results for Q4}
\paragraph{Regarding Q4}
To answer Q4, we report computation times of all the competitors on each setting in Figure \ref{fig:time}.
Note that the process of I-CAM-UV is only the enumeration of consistent graphs, and its former process obtaining CAM-UV results corresponds to CAM-UV-OVL.
Thus, the plots of I-CAM-UV in Figure \ref{fig:time} show the computation times of only the enumeration process.

We found that the enumeration process of I-CAM-UV was fast enough compared with CAM-UV-OVL in many instances, and the total computation times were comparable with the other methods.
That is, we confirmed that the best-first strategy of the I-CAM-UV algorithm is an efficient approach.
However, some instances take significantly more computation time as outliers.
Since the search space of I-CAM-UV depends on CAM-UV results, it is difficult to predict the computation time of I-CAM-UV.

In conclusion regarding Q4, (A4) \emph{I-CAM-UV runs in a realistic computation time assuming a relatively sparse DAG with ten variables. Since I-CAM-UV is an exponential time algorithm in general, we should be cautious when handling more variables. Nonetheless, because the worst case has $3^{|E|}$ search states, we can easily apply I-CAM-UV to small $E$ even if the number of variables is large.}

%%%% Number of Graphs and Minimum Costs
\begin{figure}[!t]
\centering
\includegraphics[width=0.99\linewidth]{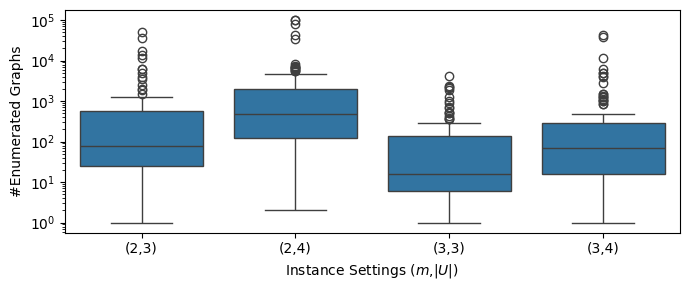}
\caption{Number of DAGs enumerated by I-CAM-UV.}
\label{fig:num-and-cost}
\end{figure}

%%% Accuracy Distributions
\begin{figure}[!t]
\centering
\includegraphics[width=0.99\linewidth]{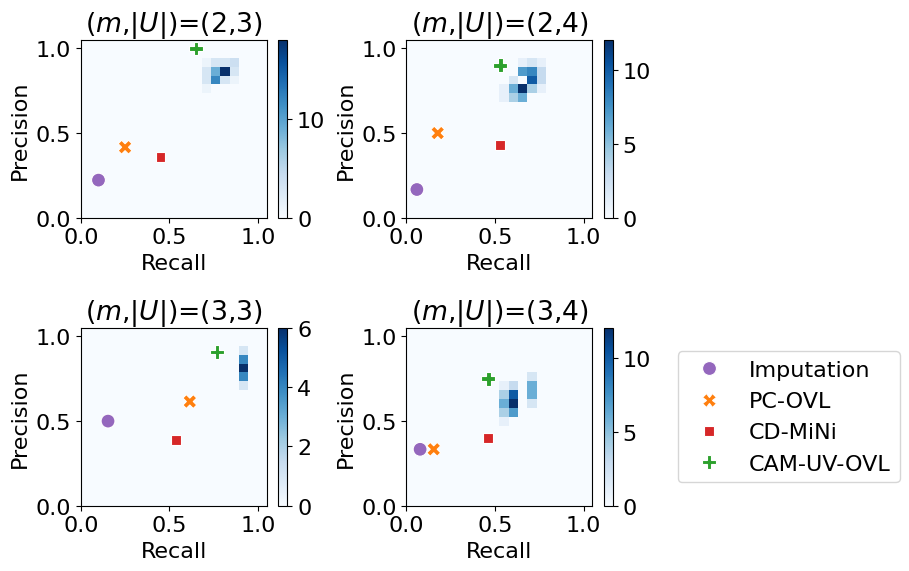}
\caption{
Accuracy distributions on specific instances of synthetic datasets.
The heatmaps show the distributions of DAGs output by I-CAM-UV.
}
\label{fig:distribution}
\end{figure}

%%% Computation Times
\begin{figure}[!t]
\centering
\includegraphics[width=0.99\linewidth]{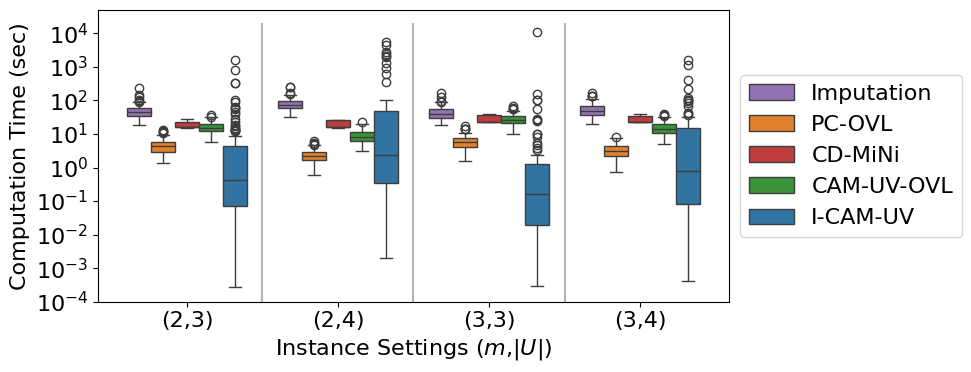}
\caption{Computation times on synthetic datasets.}
\label{fig:time}
\end{figure}

%%%%%%%%%%%%%%%%%%%%%%%%%%%%%%%%%%%%%%%%%%%%%%%%%%
% Conclusion and Discussion
%%%%%%%%%%%%%%%%%%%%%%%%%%%%%%%%%%%%%%%%%%%%%%%%%%
\section{Conclusion}
We proposed I-CAM-UV, enumerating integrated causal graphs as consistently as possible given CAM-UV results over non-identical variable sets.
The algorithm is an efficient combinatorial search leveraging the monotonicity of the inconsistency cost of DAGs.
We demonstrated that I-CAM-UV recovered causal relationships missed by CAM-UV, discovered causal relationships on unobserved variable pairs, and output consistent DAGs of similar accuracies.

\newcommand{\myparagraph}[1]{ \noindent {\mbox{\textbf{#1}~~}}}

% Limitation を書いておくとよい？
\myparagraph{Limitations}
We think that I-CAM-UV has the following three limitations.
(a) While tractable properties of UCP and UBP provide I-CAM-UV, extending the algorithm beyond CAM-UV is non-trivial.
(b) Because we may obtain a huge number of DAGs by I-CAM-UV, human checks are not easy for all the individual DAGs.
(c) The accuracy of I-CAM-UV is highly dependent on the accuracy of CAM-UV results.
Based on these considerations, future work is as follows.

\myparagraph{Future Work}
From a theoretical perspective, the following three points are of interest:
(i-a) Can we construct an algorithm integrating causal graphs with unobserved variables via other methods than CAM-UV, such as RCD~\citep{Maeda2020RCD}?
(i-b) How can we construct a Markov equivalence class-like notion for enumerated DAGs of I-CAM-UV?
(i-b') What are the conditions under which a unique DAG is obtained by I-CAM-UV?
On the other hand, from a practical perspective, the following three points are essential:
(ii-b) Constructing a compressed and interpretable representation of enumerated DAGs.
(ii-b') Developing an algorithm to evaluate the fitness of a given DAG over multiple datasets with non-identical variable sets.
(ii-c) Making I-CAM-UV more practical by improving the accuracy of CAM-UV itself.

%%%%%%%%%%%%%%%%%%%%%%%%%%%%%%
% Acknowledgements
%%%%%%%%%%%%%%%%%%%%%%%%%%%%%%
% TODO in Camera Ready

%%%%%%%%%%%%%%%%%%%%%%%%%%%%%%%%%%%%%%%%%%%%%%%%%%
% References
%%%%%%%%%%%%%%%%%%%%%%%%%%%%%%%%%%%%%%%%%%%%%%%%%%
\bibliography{aaai2026}

@misc{pham2025camuvx,
      title={Causal Additive Models with Unobserved Causal Paths and Backdoor Paths}, 
      author={Thong Pham and Takashi Nicholas Maeda and Shohei Shimizu},
      year={2025},
      eprint={2502.07646},
      archivePrefix={arXiv},
      primaryClass={cs.LG},
      url={https://arxiv.org/abs/2502.07646}, 
}

@book{Spirtes2000causation,
  title={Causation, prediction, and search},
  author={Spirtes, Peter and Glymour, Clark N and Scheines, Richard},
  year={2000},
  publisher={MIT press}
}

@article{Fu2025nature,
  title = {A foundation model of transcription across human cell types},
  author = {Fu, Xi and Mo, Shentong and Buendia, Alejandro and Laurent, Anouchka P. and Shao, Anqi and Alvarez‑Torres, Maria del Mar and Yu, Tianji and Tan, Jimin and Su, Jiayu and Sagatelian, Romella and Ferrando, Adolfo A. and Ciccia, Alberto and Lan, Yanyan and Owens, David M. and Palomero, Teresa and Xing, Eric P. and Rabadan, Raul},
  journal = {Nature},
  year = {2025},
  volume = {637},
  pages = {965--973},
  doi = {10.1038/s41586-024-08391-z},
  month = jan,
}

@article{Campomanes2014origin,
  title={Origin of the spectral shifts among the early intermediates of the rhodopsin photocycle},
  author={Campomanes, Pablo and Neri, Marilisa and Horta, Bruno AC and Röhrig, Ute F and Vanni, Stefano and Tavernelli, Ivano and Rothlisberger, Ursula},
  journal={Journal of the American Chemical Society},
  volume={136},
  number={10},
  pages={3842--3851},
  year={2014},
  publisher={ACS Publications}
}

@article{Fu2025Environmental,
title = {Integrating explainable AI and causal inference to unveil regional air quality drivers in China},
journal = {Journal of Environmental Management},
volume = {390},
pages = {126270},
year = {2025},
issn = {0301-4797},
doi = {https://doi.org/10.1016/j.jenvman.2025.126270},
author = {Zhiyuan Fu and Xiao Yang and Yike Ma and Yuhang Sun and Tianlian Wang},
}

@article{Smith2025Multi,
title = {Multi-scale causality in active matter},
journal = {Computers \& Chemical Engineering},
volume = {197},
pages = {109052},
year = {2025},
issn = {0098-1354},
doi = {https://doi.org/10.1016/j.compchemeng.2025.109052},
url = {https://www.sciencedirect.com/science/article/pii/S0098135425000560},
author = {Alexander Smith and Dipanjan Ghosh and Andrew Tan and Xiang Cheng and Prodromos Daoutidis},
}

@article{Runge2023causal,
  title={Causal inference for time series},
  author={Runge, Jakob and Gerhardus, Andreas and Varando, Gherardo and Eyring, Veronika and Camps-Valls, Gustau},
  journal={Nature Reviews Earth \& Environment},
  volume={4},
  number={7},
  pages={487--505},
  year={2023},
  publisher={Nature Publishing Group UK London}
}

@article{Zhou2025tcbbio,
  title = {Multi‑Objective Structure‑Based Drug Design Using Causal Discovery},
  author = {Zhou, Jingyuan and Zhao, Dengwei and Hao, Qian and Tu, Shikui and Xu, Lei},
  journal = {IEEE/ACM Transactions on Computational Biology and Bioinformatics},
  year = {2025},
  volume = {PP},
  number = {99},
  pages = {1--12},
  doi = {10.1109/TCBBIO.2025.3572178}
}

@incollection{Tillman2009ION,
  title={Integrating locally learned causal structures with overlapping variables},
  author={Tillman, Robert E. and Danks, David and Glymour, Clark},
  booktitle={Advances in Neural Information Processing Systems},
  pages={1665--1672},
  year={2009},
  publisher = {Curran Associates Inc.}
}

@article{shimizu2006,
	author = {Shimizu, Shohei and Hoyer, Patrik O. and Hyv{\"a}rinen, Aapo and Kerminen, Antti},
	journal = {Journal of Machine Learning Research},
	number = {Oct},
	pages = {2003--2030},
	title = {A linear {non-Gaussian} acyclic model for causal discovery},
	volume = {7},
	year = {2006}}

@article{chickering2002,
	Author = {Chickering, David Maxwell},
	Journal = {Journal of machine learning research},
	Number = {Nov},
	Pages = {507--554},
	Title = {Optimal structure identification with greedy search},
	Volume = {3},
	Year = {2002}}

@inproceedings{Triantafillou2010COmbINE,
  title={Learning causal structure from overlapping variable sets},
  author={Sofia Triantafillou and Ioannis Tsamardinos and Ioannis Tollis},
  booktitle={Proceedings of the Thirteenth International Conference on Artificial Intelligence and Statistics},
  pages={860--867},
  year={2010}
}

@inproceedings{Tillman2011IOD,
  title={Learning equivalence classes of acyclic models with latent and selection variables from multiple datasets with overlapping variables},
  author={Tillman, Robert E. and Spirtes, Peter},
  booktitle={Proceedings of the Fourteenth International Conference on Artificial Intelligence and Statistics},
  pages={3--15},
  year={2011}
}

@inproceedings{Huang2020CD-MiNi,
  title={Causal discovery from multiple data sets with non-identical variable sets},
  author={Huang, Biwei and Zhang, Kun and Gong, Mingming and Glymour, Clark},
  booktitle={Proceedings of the AAAI conference on artificial intelligence},
  volume={34},
  pages={10153--10161},
  year={2020}
}

@article{Buhlmann14CAM,
  title={{CAM}: Causal additive models, high-dimensional order search and penalized regression},
  author={B{\"u}hlmann, Peter and Peters, Jonas and Ernest, Jan},
  journal={Annals of Statistics},
  volume={42},
  number={6},
  pages={2526--2556},
  year={2014},
  publisher={Institute of Mathematical Statistics}
}

@inproceedings{Maeda21UAI,
  title={Causal additive models with unobserved variables},
  author={Maeda, Takashi Nicholas and Shimizu, Shohei},
  booktitle={Proc. 37th Conference on Uncertainty in Artificial Intelligence (UAI2021)},
  pages={97--106},
  year={2021},
  organization={PMLR}
}

@incollection{Hoyer09NIPS,
 title = {Nonlinear causal discovery with additive noise models},
 author = {Patrik O. Hoyer and Dominik Janzing and Joris Mooij and Jonas Peters and Bernhard Sch\"{o}lkopf},
 booktitle = {{Advances in Neural Information Processing Systems 21}},
 pages = {689--696},
 year = {2009},
 publisher = {Curran Associates Inc.}
}

@article{Tillman14BHMK,
  title={Learning causal structure from multiple datasets with similar variable sets},
  author={Tillman, Robert E. and Eberhardt, Frederick},
  journal={Behaviormetrika},
  volume={41},
  number={1},
  pages={41--64},
  year={2014}
}

@article{Mooij20JCI,
  author  = {Joris M. Mooij and Sara Magliacane and Tom Claassen},
  title   = {Joint Causal Inference from Multiple Contexts},
  journal = {Journal of Machine Learning Research},
  year    = {2020},
  volume  = {21},
  number  = {99},
  pages   = {1--108}
}

@article{Ding2019OICA,
  title={Likelihood-free overcomplete ICA and applications in causal discovery},
  author={Ding, Chenwei and Gong, Mingming and Zhang, Kun and Tao, Dacheng},
  journal={Advances in {N}eural {I}nformation {P}rocessing {S}ystems},
  volume={32},
  year={2019}
}

@article{Spirtes91PC,
	Author = {Peter Spirtes and Clark Glymour},
	Journal = {Social Science Computer Review},
	Pages = {67-72},
	Title = {An algorithm for fast recovery of sparse causal graphs},
	Volume = {9},
	Year = {1991}}

@inproceedings{Spirtes95FCI,
	Author = {Peter Spirtes and Christopher Meek and Thomas Richardson},
	Booktitle = {{Proc. 11th Annual Conference on Uncertainty in Artificial Intelligence (UAI1995)}},
	Title = {Causal Inference in the presence of latent variables and selection bias},
	Year = 1995,
	pages = {491-506}
	}

@article{Smith11NeuroImage,
	Author = {Stephen M. Smith and Karla L Miller and Gholamreza Salimi-Khorshidi and Matthew Webster and Christian F. Beckmann and Thomas E. Nichols and Joseph D. Ramsey and Mark W. Woolrich
},
	Journal = {NeuroImage},
	Pages = {875--891},
	Title = {Network modelling methods for {FMRI}},
	Volume = {54},
	number = {2},
	Year = {2011}}

@article{Shimizu11JMLR,
  title={{DirectLiNGAM}: A direct method for learning a linear non-{G}aussian structural equation model},
  author={Shimizu, Shohei and Inazumi, Takanori and Sogawa, Yasuhiro and Hyv{\"a}rinen, Aapo and Kawahara, Yoshinobu and Washio, Takashi and Hoyer, Patrik O and Bollen, Kenneth},
  journal={Journal of Machine Learning Research},
  volume={12},
  pages={1225--1248},
  year={2011},
  publisher={JMLR. org}
}

@InProceedings{Maeda2020RCD,
  title = 	 {{RCD}: Repetitive causal discovery of linear non-{G}aussian acyclic models with latent confounders},
  author =       {Maeda, Takashi Nicholas and Shimizu, Shohei},
  booktitle = 	 {Proc. 23rd International Conference on Artificial Intelligence and Statistics (AISTATS2010)},
  pages = 	 {735--745},
  year = 	 {2020},
  volume = 	 {108},
  series = 	 {Proceedings of Machine Learning Research},
  month = 	 {26--28 Aug},
  publisher =    {PMLR}
}

@article{Erdos-Renyi,
  author = {Erdos, Paul and Renyi, Alfred},
  title = {On the evolution of random graphs},
  journal = {Publication of the Mathematical Institute of the Hungarian Academy of Sciences},
  pages = {17-61},
  volume = 5,
  year = 1960
}

@misc{duncan1972,
author = {Duncan, Otis Dudley and Featherman, David L. and Duncan, Beverly},
title ={Socioeconomic background and achievement},
publisher = {New York, Seminar Press},
year = {1972}
}

@Article{Strobl2019RCIT,
journal={Journal of Causal Inference},
author={Strobl Eric V. and Zhang Kun and Visweswaran Shyam},
title={Approximate Kernel-Based Conditional Independence Tests for Fast Non-Parametric Causal Discovery},
year={2019},
month={March},
pages={1-24},
volume={7},
number={1},
doi={10.1515/jci-2018-0017}
}

@inbook{PyTorch,
author = {Paszke, Adam and Gross, Sam and Massa, Francisco and Lerer, Adam and Bradbury, James and Chanan, Gregory and Killeen, Trevor and Lin, Zeming and Gimelshein, Natalia and Antiga, Luca and Desmaison, Alban and K\"{o}pf, Andreas and Yang, Edward and DeVito, Zach and Raison, Martin and Tejani, Alykhan and Chilamkurthy, Sasank and Steiner, Benoit and Fang, Lu and Bai, Junjie and Chintala, Soumith},
title = {PyTorch: an imperative style, high-performance deep learning library},
year = {2019},
publisher = {Curran Associates Inc.},
address = {Red Hook, NY, USA},
booktitle = {Proceedings of the 33rd International Conference on Neural Information Processing Systems},
articleno = {721},
numpages = {12},
pages = {8026-8037}
}

%%%%%%%%%%%%%%%%%%%%%%%%%%%%%%%%%%%%%%%%%%%%%%%%%%
% Appendix / Supplementary Material
%%%%%%%%%%%%%%%%%%%%%%%%%%%%%%%%%%%%%%%%%%%%%%%%%%
\appendix

\section{Proofs}

\paragraph{Proof of Theorem \ref{thm:main-thm}}
\begin{proof}
If CAM-UV results have no estimation error, we immediately have the following four statements:
(i) $\hat{A} \subseteq A^*$ holds;
(ii) $\{v_i, v_j\} \in E$ holds for any $(v_i,v_j) \in A^* \setminus \hat{A}$;
(iii) $\mathrm{UP}_{G^*,V_k} = \emptyset$ holds for any $\{v_i, v_j\} \in I_k$;
(iv) $\mathrm{UP}_{G^*,V_k} \neq \emptyset$ holds for any $\{v_i, v_j\} \in N_k$.
Therefore, because we have $V = \bigcup_{k=1}^m V_k$ and $G^*$ can be obtained from $\hat{G} = (V, \hat{A})$ by assigning directions to or excluding edges in $E$, we can state $G^*$ is consistent according to Definition \ref{dfn:consistent}.
\end{proof}

\paragraph{Proof of Lemma \ref{lem:equality}}
\begin{proof}
Because $A_{|E|} = \emptyset$, we have $\tilde{G} = \tilde{G}[|E|]$ that implies $\bar{N}_k(\tilde{G}) = \bar{N}_k(\tilde{G}[|E|])$.
Hence, the lemma follows.
\end{proof}

\paragraph{Proof of Theorem \ref{thm:monotonicity}}
\begin{proof}
Because any UCP/UBP is lost only by specific edge removals, $\mathrm{UP}_{\tilde{G},V_k}(v_i, v_j) \subseteq \mathrm{UP}_{\tilde{G}',V_k}(v_i,v_j)$ holds for any $k$ and $\{v_i,v_j\} \in I_k$.
Similarly, because any UCP/UBP is not caused by any edge removal, $\mathrm{UP}_{\tilde{G}'[s],V_k}(v_i, v_j) \subseteq \mathrm{UP}_{\tilde{G}[t],V_k}(v_i,v_j)$ holds for any $k$ and $\{v_i,v_j\} \in N_k$.
Hence, we have $\bar{I}_k(\tilde{G}) \subseteq \bar{I}_k(\tilde{G}')$ and $\bar{N}_k(\tilde{G}[t]) \subseteq \bar{N}_k(\tilde{G}'[s])$ for any $k$, and the theorem follows.
\end{proof}

\section{Detailed Experimental Settings}

\subsection{Synthetic Dataset Generation}
We generated synthetic datasets on CAM with 10 variables as follows.
Skeletons of the ground truth causal graphs were based on Erd\H{o}s–R\'{e}nyi random graph model~\cite{Erdos-Renyi} with the edge generation probability 0.3, i.e., each variable pair was connected by an edge with the probability of 0.3.
To make the graph DAG, we randomly generated a permutation of 10 variables and assigned a direction to each edge according to the variable order in the generated permutation.
The value of each variable $v_i$ was determined by the following formulas.
\begin{align*}
v_i = \frac{h_i}{\mathrm{sd}(h_i)},~h_i = \sum_{v_j \in P_i^*} ((v_j + a_{i,j})^{c_{i,j}} + b_{i,j}) + n_i
\end{align*}
where $a_{i,j}$, $b_{i,j}$, and $c_{i,j}$ denote constants, $n_i$ denotes a random noise, and $\mathrm{sd}(h_i)$ denotes the standard deviation of $h_i$.
For more details, we randomly chosen $a_{i,j} \sim U(-5,5)$, $b_{i,j} \sim U(-1, 1)$, $c_{i,j} \in \{2,3\}$, and $n_i \sim U(-10 + d_i, 10 + d_i)$ where $d_i \sim U(-2, 2)$.
For the given number of observations, we sampled each variable in the topological order of the causal graph.
The above setting is used in existing CAM-UV literatures~\cite{Maeda21UAI,pham2025camuvx}.

\subsection{fMRI Dataset}
As an additional experimental evaluation of I-CAM-UV, we tried a well-known semi-real simulated dataset based on a mathematical model of interactions among brain regions.
The datasets were generated by~\cite{Smith11NeuroImage} and we picked one of them (``sim2''), which has 10 variables and 10,000 observations.

\begin{figure}[b]
\centering
\includegraphics[width=0.99\linewidth]{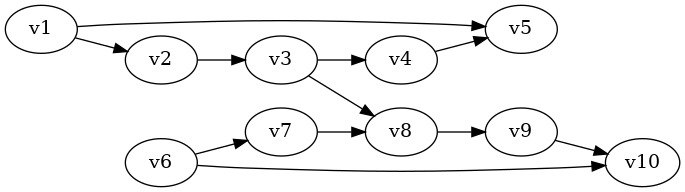}
\caption{Causal graph of fMRI dataset}
\label{fig:fmri-dag}
\end{figure}

We generated 100 random instances on fMRI dataset.
For each instance, we randomly resampled datasets with non-identical variable sets where the number of datasets was $m \in \{2,3\}$ and the number of unobserved variables per dataset was $|U| \in \{3, 4\}$.
We also took care to obtain at least $|U|$ observed (resp. unobserved) variable pairs having causal relationships.
Moreover, each dataset was generated to have at least one common variable with one of the other datasets.
The number of observations sampled per dataset was 1,000.

\subsection{Parameters of Competitors}
\begin{itemize}
\item I-CAM-UV: We set $b = 0$ for synthetic datasets and $b = 2$ for fMRI dataset.
For the pre-processing of I-CAM-UV to compute CAM-UV results, we used the same parameters as CAM-UV-OVL.
\item CAM-UV-OVL: We need to set the threshold $\alpha$ of independence tests, the number of explanatory variables $\mu_{\mathrm{base}}$ for the base search~\cite{Maeda21UAI}, and the number of explanatory variables $\mu_{\mathrm{ext}}$ for the extended search~\cite{pham2025camuvx}.
We set $\alpha = 0.05$, $\mu_{\mathrm{base}} = 2$, and $\mu_{\mathrm{base}} = 1$.
\item PC-OVL: We set the threshold of independence tests $\alpha = 0.05$, and we used Randomized Conditional Independence Test (RCIT)~\cite{Strobl2019RCIT} with a fixed random seed for fast computations on non-linear continuous relationships.
\item Imputation: We used the $k$-nearest neighbor-based imputation strategy of $k = 5$.
After the imputation, we estimated CAM by the same parameters as CAM-UV-OVL.
\item CD-MiNi: The algorithm~\cite{Huang2020CD-MiNi} is based on an overcomplete independent component analysis method~\cite{Ding2019OICA} by using Gaussian mixture models of noise variables and using a stochastic gradient descent.
We set the number of Gaussian components to $10$ and the sparse regularization parameter to $0.01$.
We used the default Adam optimizer of PyTorch~\cite{PyTorch} with the learning rate of $0.01$, the batch size of $100$, and $100$ epochs.
After the coefficient matrix of the estimated causal graph is obtained, we pruned the edges under the absolute coefficient threshold of $0.1$.
\end{itemize}

\section{Experimental Results on fMRI dataset}
We conclude the experimental results on the fMRI dataset as follows.
Figure \ref{fig:accuracies-fmri} shows the accuracies of the competitors, including recall scores on observed (resp. unobserved) variable pairs, overall precision scores, and overall F1 scores.
Figure \ref{fig:nsol-fmri} shows the distributions of the number of enumerated DAGs by I-CAM-UV.
Figure \ref{fig:distribution-fmri} shows the accuracy distributions of some specific instances.
Figure \ref{fig:time-fmri} shows the computation times of the competitors.

In the aspect of accuracy, we obtained the results similar to those of synthetic datasets, except for the recall scores on unobserved variable pairs.
On the fMRI dataset, we found that the imputation method could often achieve higher recall scores on unobserved variable pairs compared with DAGs of the minimum inconsistency cost ($b = 0$).
Thus, we investigated the accuracies of DAGs with the additional inconsistency costs ($b \in \{1,2\}$).
We then found that the recall scores of I-CAM-UV were improved on average as the cost increased.
On the other hand, although the precision scores of I-CAM-UV decreased as the cost increased, the F1 scores of I-CAM-UV had little or no changes.
Therefore, as with the experiments on synthetic datasets, while I-CAM-UV could recover/discover some correct causal relationships, there were a similar number of false discoveries.

The number of enumerated DAGs by I-CAM-UV was distributed from little to huge, and seemed to increase exponentially as the cost increased.
However, the accuracy distribution of enumerated DAGs on almost all instances made a single cluster.
Such properties on the distributions of the number of DAGs and their accuracy can be said to be similar to those of synthetic datasets.
On the other hand, we found that I-CAM-UV could run in a much faster time than the competitors.
This is likely due to the sparsity and characteristic shape of the causal graph assumed in the fMRI dataset.

%%% Accuracies
\begin{figure}
\centering
\includegraphics[width=0.99\linewidth]{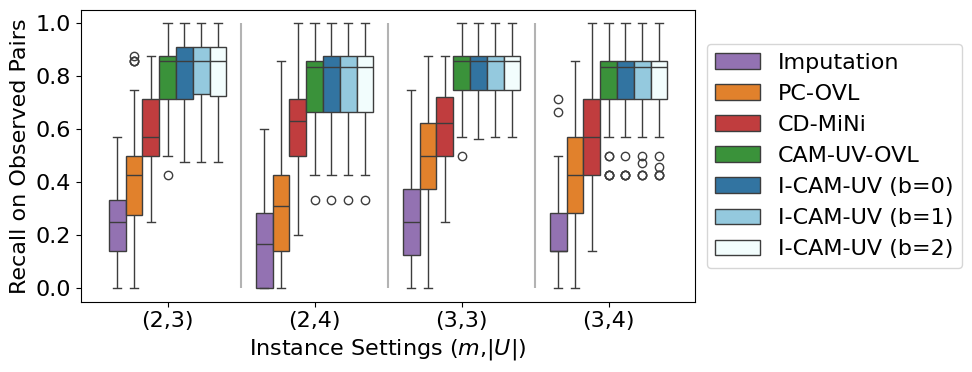}
\centering
\includegraphics[width=0.99\linewidth]{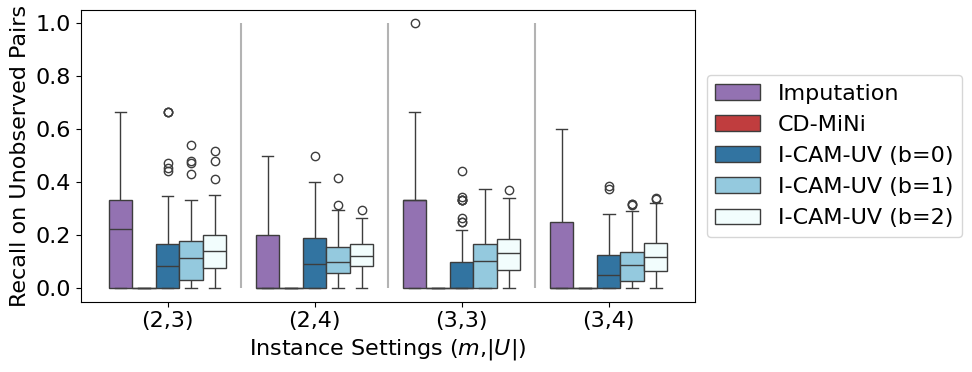}
\centering
\includegraphics[width=0.99\linewidth]{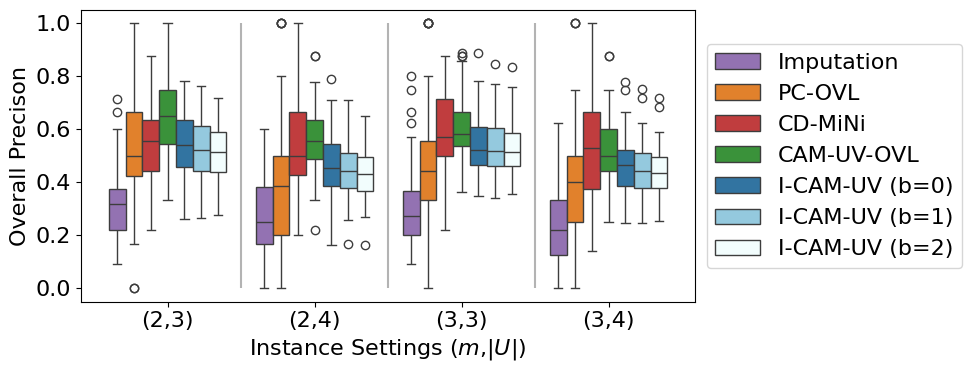}
\centering
\includegraphics[width=0.99\linewidth]{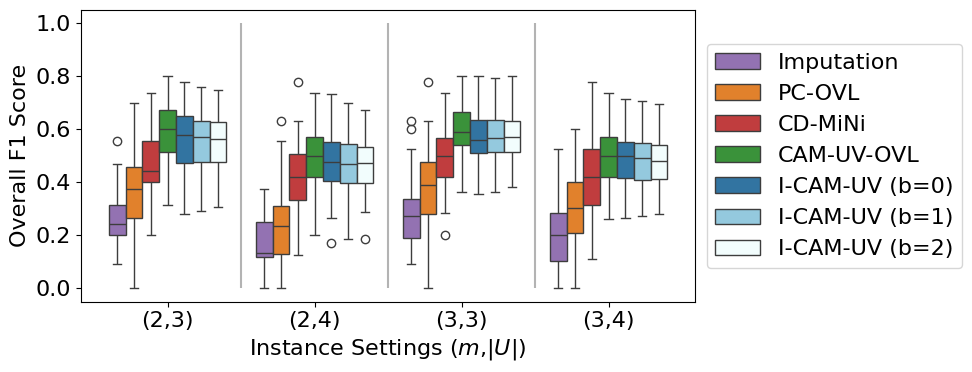}
\caption{Accuracies on fMRI dataset.}
\label{fig:accuracies-fmri}
\end{figure}

%%%% Number of Graphs and Minimum Costs
\begin{figure}[!t]
\centering
\includegraphics[width=0.99\linewidth]{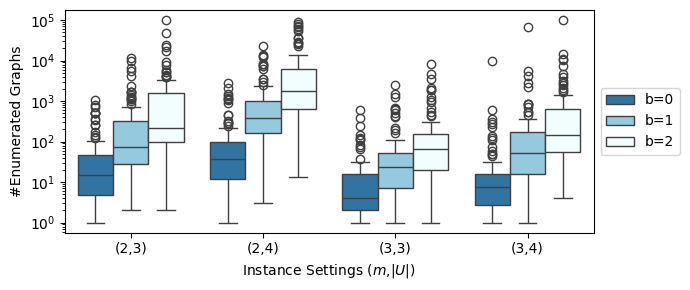}
\caption{Number of DAGs enumerated by I-CAM-UV on fMRI dataset.}
\label{fig:nsol-fmri}
\end{figure}

%%% Accuracy Distributions
\begin{figure}[!t]
\centering
\includegraphics[width=0.99\linewidth]{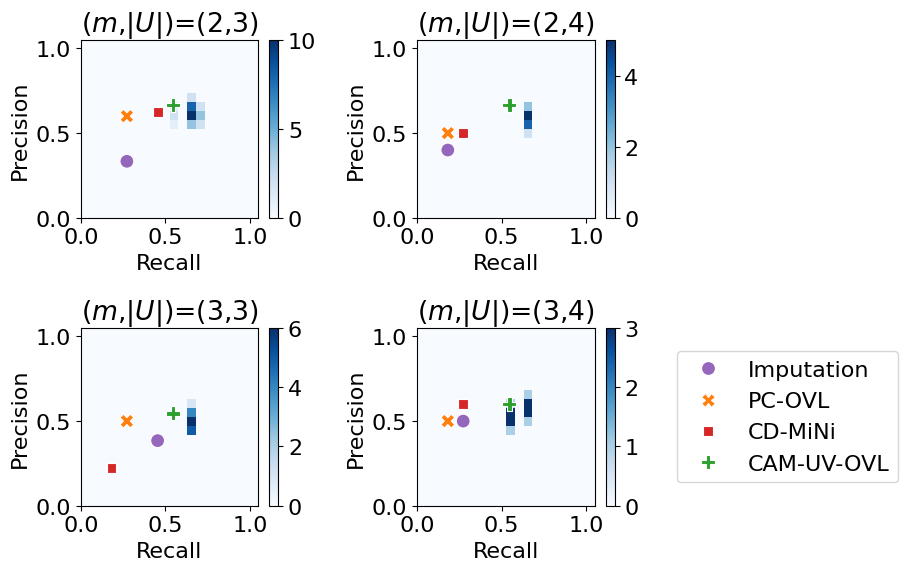}
\centering
\includegraphics[width=0.99\linewidth]{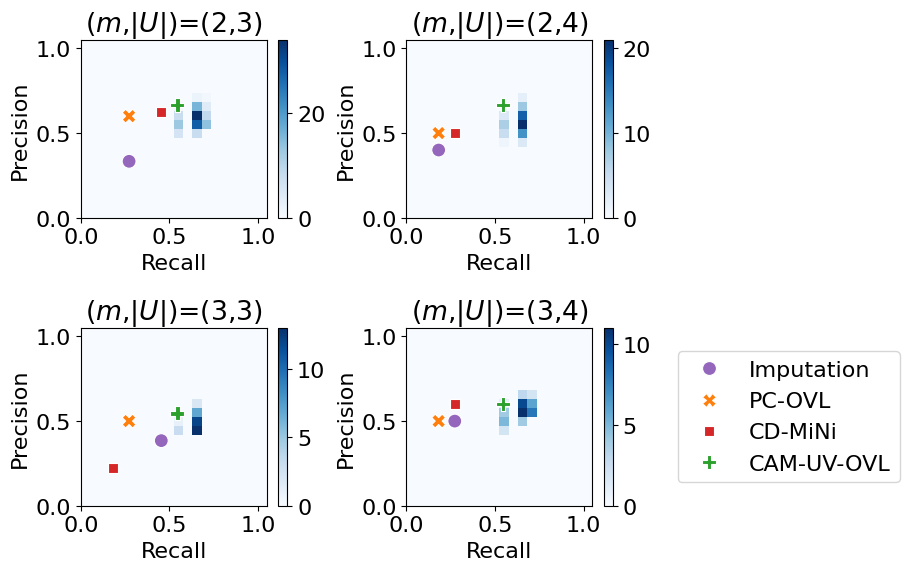}
\centering
\includegraphics[width=0.99\linewidth]{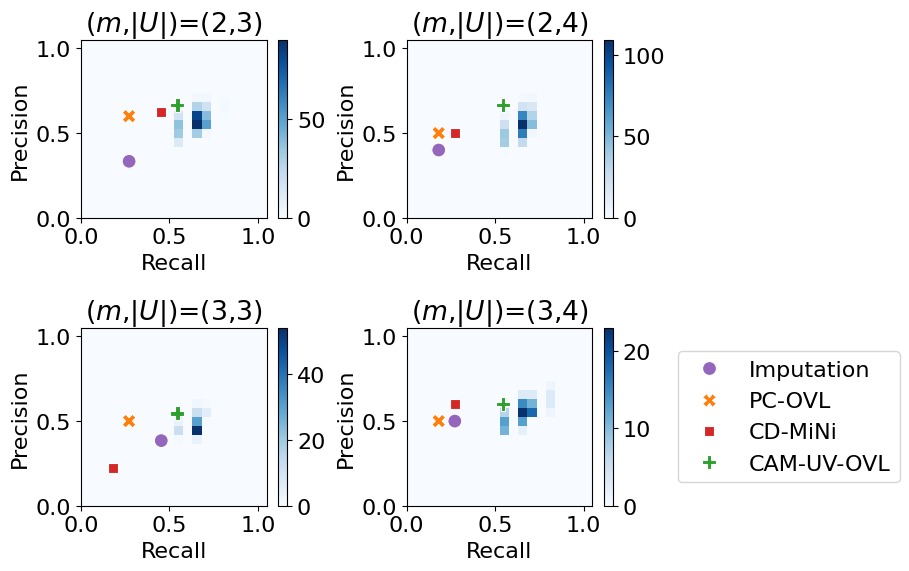}
\caption{
Accuracy distributions on specific instances of fMRI dataset.
The heatmaps show the distributions of DAGs output by I-CAM-UV.
The top fours are cases of $b = 0$, the middle fours are cases of $b = 1$, and the bottom fours are cases of $b = 2$.
}
\label{fig:distribution-fmri}
\end{figure}

%%% Computation Times
\begin{figure}[!t]
\centering
\includegraphics[width=0.99\linewidth]{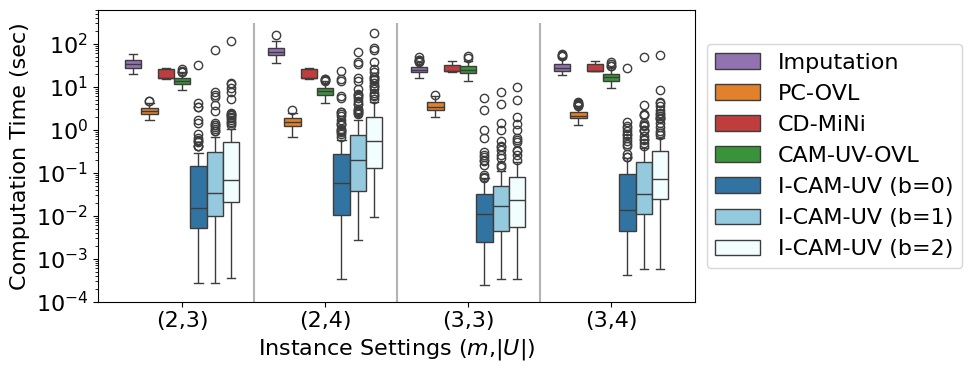}
\caption{Computation times on fMRI dataset.}
\label{fig:time-fmri}
\end{figure}

\section{Discussion on Minimum Inconsistency Cost}
I-CAM-UV enumerates DAGs of the minimum inconsistency cost for given CAM-UV results.
The inconsistency cost means how many variable pairs are inconsistent with the CAM-UV results in terms of the presence/absence of a UCP/UBP.
In other words, if the minimum cost is large, it suggests that there is no sufficiently appropriate DAG, and there may be an issue with the accuracy of CAM-UV.
On the other hand, it should be noted that, as discussed in the main paper, there are cases where the minimum cost becomes zero even if there are errors in the CAM-UV results.

Figure \ref{fig:cost-synthetic} and Figure \ref{fig:cost-fmri} show the distributions of minimum inconsistency costs on synthetic and fMRI datasets, respectively.
In our experimental settings, if the instance parameter is $(m,|U|)$, the \emph{maximum} inconsistency cost of the instance is $m \cdot \frac{(10 - |U|)(9 - |U|)}{2}$, i.e., $42$ for $(2,3)$, $30$ for $(2,4)$, $63$ for $(3,3)$, and $45$ for $(3,4)$.
As a result, we found that, in many instances, the minimum inconsistency cost was greater than one-tenth and less than one-third of the maximum inconsistency cost.
We consider the results indicate relatively high inconsistency costs, and believe that improving the accuracy of CAM-UV is an urgent task.

%%% Minimum Inconsistency Cost
\begin{figure}[!t]
\centering
\includegraphics[width=0.99\linewidth]{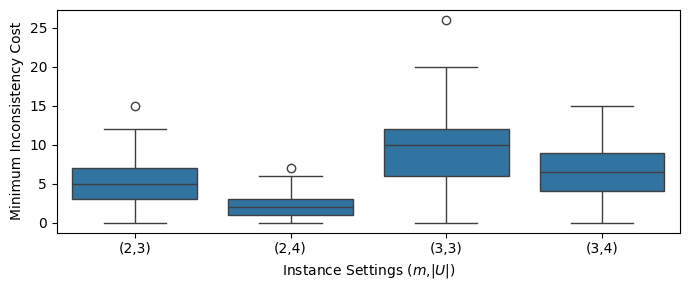}
\caption{Minimum inconsistency costs on synthetic dataset.}
\label{fig:cost-synthetic}
\end{figure}

\begin{figure}[!t]
\centering
\includegraphics[width=0.99\linewidth]{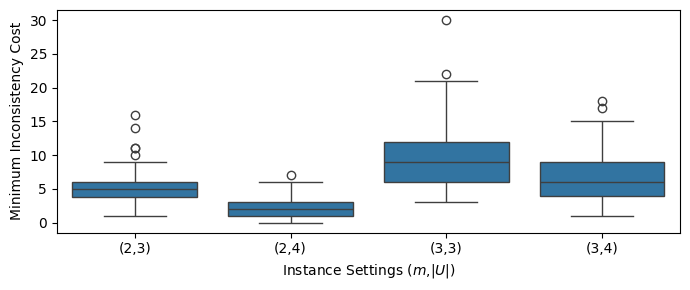}
\caption{Minimum inconsistency costs on fMRI dataset.}
\label{fig:cost-fmri}
\end{figure}

\section{Demo on Social Analysis Example}
We analyzed a social analysis dataset named State Attainment Model~\cite{Shimizu11JMLR}, which had been taken from a sociological data repository called General Social Survey~\footnote{http://www.norc.org/GSS+Website/}.
(This dataset is not publicly available. Please contact the authors of~\cite{Shimizu11JMLR} if needed.)
The dataset contained 1,380 samples and six observed variables: ``Father's Occupation'', ``Father's Education'', ``Son's Occupation'', ``Son's Education'', ``Number of Siblings'', and ``Son's Income''.
The sample selection had been conducted based on the following criteria: i) non-farm background; ii) ages 35 to 44; iii) white; iv) male; v) in the labor force at the time of the survey; vi) not missing data for any of the covariates; vii) years 1972-2006.
Figure \ref{fig:knowledge} shows domain knowledge about their causal relationships~\cite{duncan1972}.
As shown in the figure, there could be some latent confounders that violated the ideal situation of I-CAM-UV.
An objective of this example is to demonstrate how we can use I-CAM-UV in practical analysis.

We randomly divided the sample set into two equal-sized sample sets.
In the first dataset, we took only three variables ``Father's Occupation'', ``Father's Education'', and Son's Education.
This can be understood as a small dataset created to analyze how a son's educational level is influenced by his father.
In the second dataset, we took only four variables ``Son's Occupation'', ``Son's Education'', ``Number of Siblings'', and ``Son's Income''.
This can be understood as a small dataset created to analyze how an individual's income is influenced by their siblings and skills.
Our goal was to analyze how son's income has been influenced by his family environment and skills, integrating the CAM-UV result of the above two datasets.

First, we removed outliers from the two datasets by applying the IQR (Inter Quartile Range) method to each variable.
As a result, we obtained 683 and 603 samples for the first dataset and the second dataset, respectively.
In these datasets, because the reliability of independence tests is not enough due to the small samples, we set the relatively high threshold $\alpha = 0.3$ for CAM-UV.

After that, we obtained the CAM-UV results as shown in Figure \ref{fig:sam-cam-uv}.
These provide only two identified causal relationships from ``Son's Education'' to ``Son's Income'' and from ``Son's Occupation'' to ``Son's Income''.
We considered the situation that CAM-UV results could have some estimation errors.
Therefore, we set the I-CAM-UV parameter $b = 2$.
As a result, the minimum inconsistency cost was zero, and I-CAM-UV enumerated 10,483 DAGs with inconsistency costs less than or equal to two.

Because the target variable of this analysis is ``Son's Income'', we set a condition that ``Son's Income'' becomes the sink, for narrowing down enumerated DAGs.
As a result, the number of DAGs was reduced to 4,051.
We show three randomly sampled examples in Figure \ref{fig:sam-icamuv}.
We then found an intuitive causal relationship from ``Father's Education'' to ``Son's Education''.

Moreover, we add this intuitive causal relationship from ``Father's Education'' to ``Son's Education'' into the sampling conditions.
As a result, the number of DAGs was reduced to 1,575.
We show three randomly sampled examples in Figure \ref{fig:sam-icamuv+}.
We then found that there may be a causal relationship between ``Father's Occupation'' and ``Son's Education''.

On the above newly indicated causal relationship, the direction from ``Son's Education'' to ``Father's Occupation'' seemed to be contradictory.
Therefore, we set a further condition that there is a causal relationship from ``Father's Education'' to ``Son's Education''.
As a result, the number of DAGs was reduced to 623.
We show three randomly sampled examples in Figure \ref{fig:sam-icamuv++}.
We then obtained highly intuitive causal graphs.

%%% knowledge
\begin{figure*}
\centering
\includegraphics[width=0.99\linewidth]{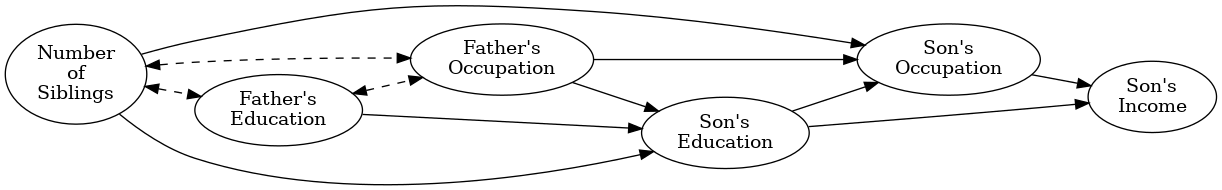}
\caption{
Status attainment model based on domain knowledge~\cite{duncan1972}.
The solid directed edges represent expected causal relationships.
The dashed bi-directed edges represent not modeled causal relationships, e.g., there may be latent confounders between the two, there may be a directed edge between the two, or the two may be independent.
}
\label{fig:knowledge}
\end{figure*}

%%% CAM-UV results
\begin{figure*}
%%%%% 0
\begin{minipage}[b]{0.49\linewidth}
\centering
\includegraphics[width=0.99\linewidth]{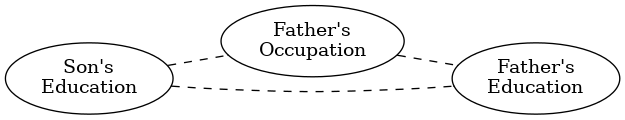}
\end{minipage}
%%%%% 1
\begin{minipage}[b]{0.49\linewidth}
\centering
\includegraphics[width=0.99\linewidth]{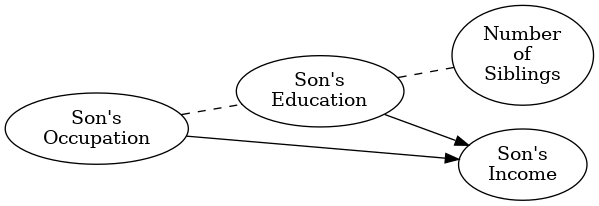}
\end{minipage}
%%%%%
\caption{
CAM-UV results for status attainment model dataset.
The solid directed edges represent estimated causal relationships.
The dashed edges represent not identified causal relationships due to the estimated existence of UCBs/UBPs.
}
\label{fig:sam-cam-uv}
\end{figure*}

%%% I-CAM-UV
\begin{figure*}
%%%%% 0
\centering
\includegraphics[width=0.99\linewidth]{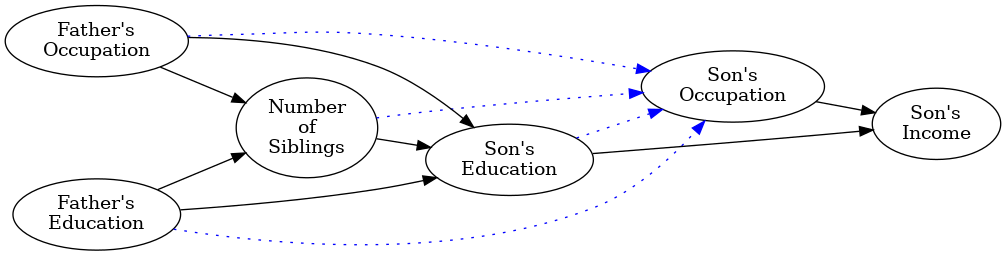}
%%%%% 1
\centering
\includegraphics[width=0.99\linewidth]{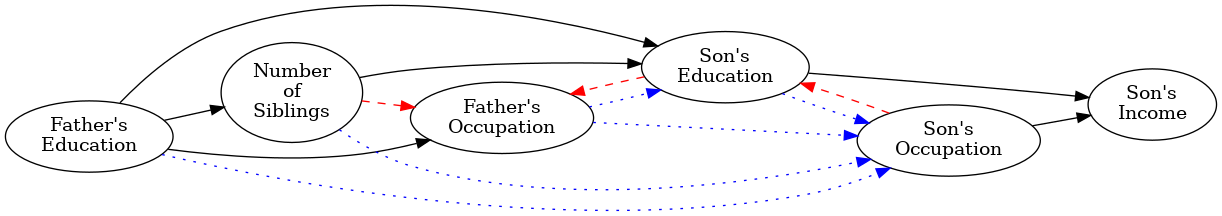}
%%%%% 2
\centering
\includegraphics[width=0.99\linewidth]{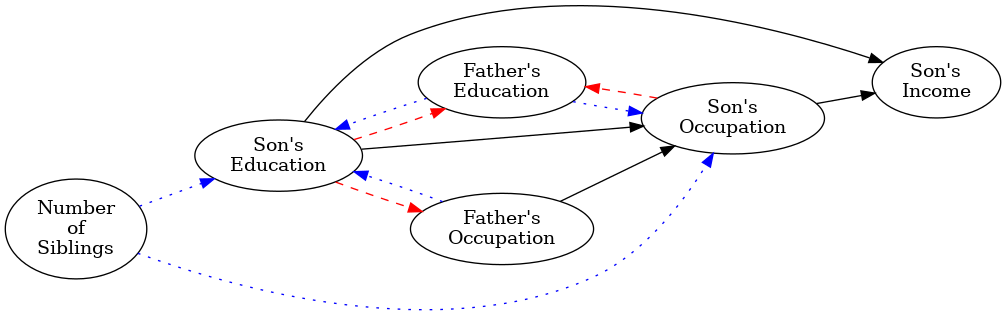}
%%%%%
\caption{
Three examples of the graphs enumerated by I-CAM-UV with specifying ``Son's Income'' becomes the sink.
The solid black directed edges represent correctly estimated causal relationships.
The dashed red directed edges represent wrongly estimated causal relationships.
The dotted blue directed edges represent expected but not estimated causal relationships.
}
\label{fig:sam-icamuv}
\end{figure*}

%%% I-CAM-UV+
\begin{figure*}
%%%%% 0
\centering
\includegraphics[width=0.99\linewidth]{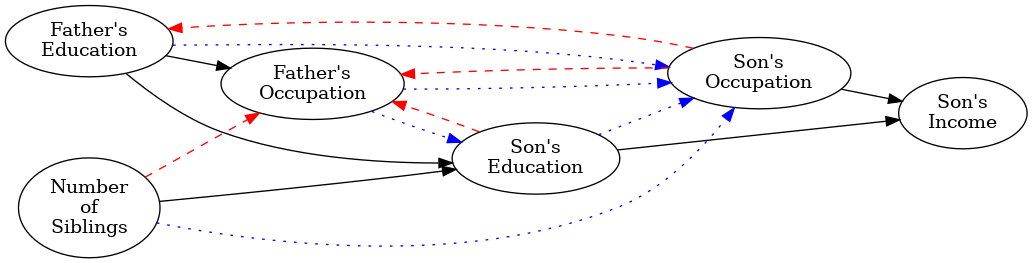}
%%%%% 1
\centering
\includegraphics[width=0.99\linewidth]{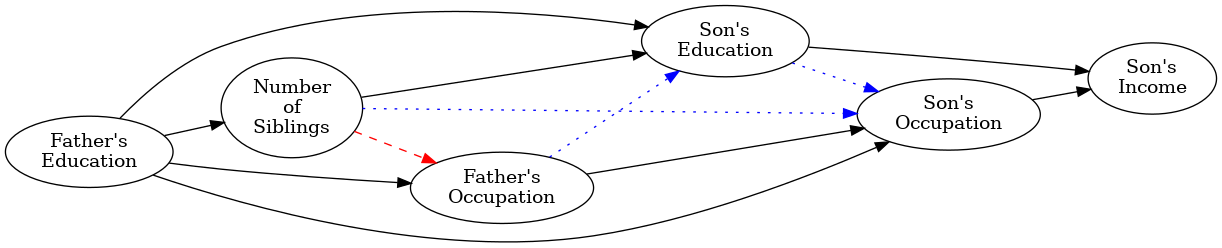}
%%%%% 2
\centering
\includegraphics[width=0.99\linewidth]{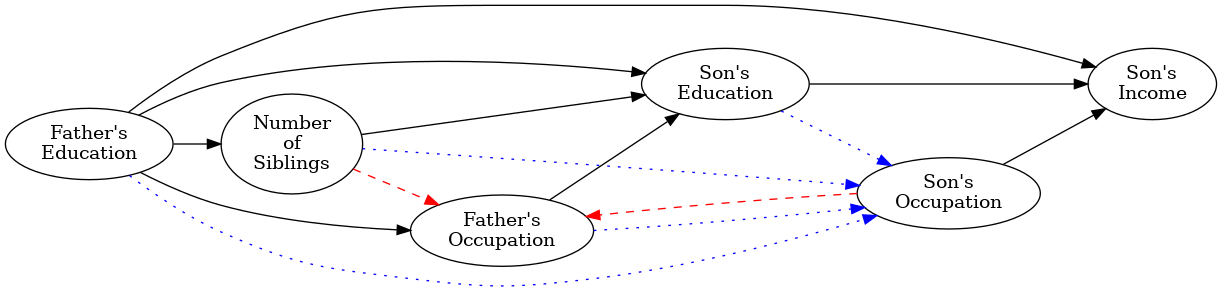}
%%%%%
\caption{
Three examples of the graphs enumerated by I-CAM-UV with specifying the following conditions: ``Son's Income'' becomes the sink, and there is a causal relationship from ``Father's Education'' to ``Son's Education''.
The solid black directed edges represent correctly estimated causal relationships.
The dashed red directed edges represent wrongly estimated causal relationships.
The dotted blue directed edges represent expected but not estimated causal relationships.
}
\label{fig:sam-icamuv+}
\end{figure*}

%%% I-CAM-UV++
\begin{figure*}
%%%%% 0
\centering
\includegraphics[width=0.99\linewidth]{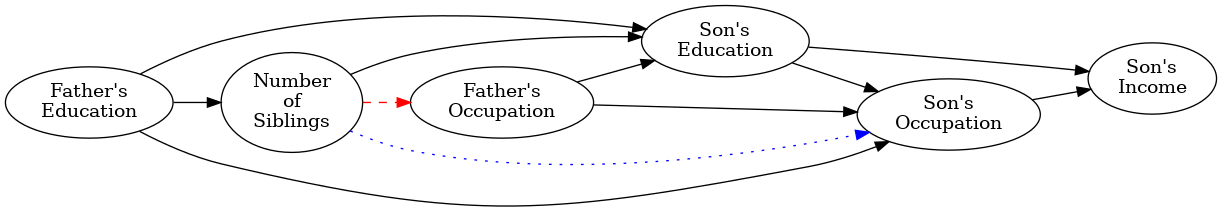}
%%%%% 1
\centering
\includegraphics[width=0.99\linewidth]{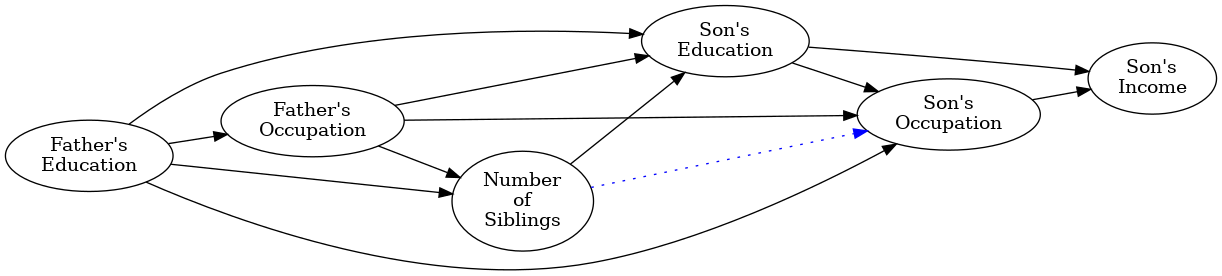}
%%%%% 2
\centering
\includegraphics[width=0.99\linewidth]{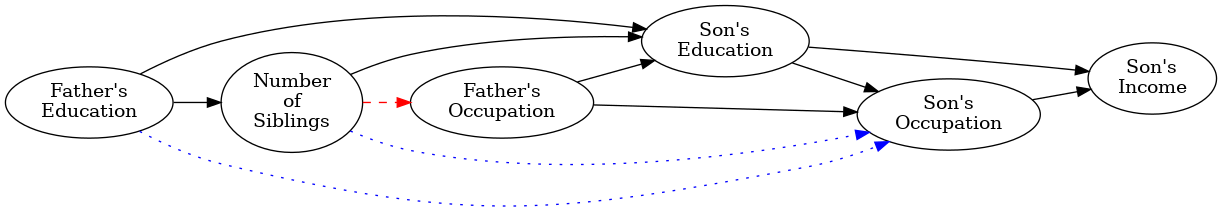}
%%%%%
\caption{
Three examples of the graphs enumerated by I-CAM-UV with specifying the following conditions: ``Son's Income'' becomes the sink, there is a causal relationship from ``Father's Education'' to ``Son's Education'', and there is a causal relationship from ``Father's Occupation'' to ``Son's Education''.
The solid black directed edges represent correctly estimated causal relationships.
The dashed red directed edges represent wrongly estimated causal relationships.
The dotted blue directed edges represent expected but not estimated causal relationships.
}
\label{fig:sam-icamuv++}
\end{figure*}

\end{document}